%% file: main.tex
\documentclass{article}

\usepackage{microtype}
\usepackage{graphicx}
\usepackage{subcaption}
\usepackage{booktabs} %

\usepackage{hyperref}

\input{math_commands}

\usepackage[accepted]{icml2026}

\usepackage{amsmath}
\usepackage{amssymb}
\usepackage{mathtools}
\usepackage{amsthm}
\usepackage{enumitem}
\usepackage{subcaption}

\usepackage[capitalize,noabbrev]{cleveref}

\crefname{appendix}{Appendix}{Appendices}
\Crefname{appendix}{Appendix}{Appendices}

\theoremstyle{plain}
\newtheorem{theorem}{Theorem}[section]
\newtheorem{proposition}[theorem]{Proposition}
\newtheorem{lemma}[theorem]{Lemma}

\theoremstyle{definition}

\theoremstyle{remark}

\usepackage[textsize=tiny]{todonotes}
\usepackage{xcolor}

\icmltitlerunning{Sequential Kernel-based Conditional Independence Testing via Adaptive Betting}

\begin{document}

\twocolumn[
  \icmltitle{Sequential Kernel-based Conditional Independence Testing via Adaptive Betting}

  \icmlsetsymbol{equal}{*}

  \begin{icmlauthorlist}
    \icmlauthor{Zheng He}{ubc}
    \icmlauthor{Danica J. Sutherland}{ubc,amii}
  \end{icmlauthorlist}

  \icmlaffiliation{ubc}{Department of Computer Science, University of British Columbia, Vancouver, Canada}
  \icmlaffiliation{amii}{Alberta Machine Intelligence Institute, Edmonton, Canada}

  \icmlcorrespondingauthor{Zheng He}{zhhe@cs.ubc.ca}
  \icmlcorrespondingauthor{Danica J. Sutherland}{dsuth@cs.ubc.ca}

  \icmlkeywords{Machine Learning, ICML}

  \vskip 0.3in
]

\printAffiliationsAndNotice{}  %

\begin{abstract}
  Testing conditional independence is fundamental
  yet intrinsically difficult:
  without additional assumptions, Type I error control is impossible in general.
  The ``Model-X'' paradigm addresses this difficulty by assuming exact knowledge of a relevant conditional distribution.
  While small deviations from this assumption can sometimes be tolerated in classical one-shot testing, 
  existing sequential conditional independence tests typically require the Model-X conditional to be known exactly, 
  making them fragile when it must instead be estimated.
  We propose a new approach that is substantially more robust to such estimation error. 
  Our method applies testing-by-betting to an adaptively optimized Kernel Conditional Independence statistic, 
  together with a normalization scheme and a truncate-and-shift calibration strategy.
  These modifications greatly reduce Type I error inflation while preserving high power
  across high-dimensional synthetic benchmarks and real-world fairness tasks, outperforming existing sequential Model-X approaches.
  Code is available at \url{https://github.com/he-zh/SKCI}.
\end{abstract}

\section{Introduction}

Do customers pay more for car insurance based on their race, even after controlling for neighborhood risk factors  \citep{angwin2017minority}?
Can predictive models satisfy the widely used notion of \emph{equalized odds} fairness, thereby mitigating such disparities
\citep{hardt:eq-opp}?
Does a pedestrian detection model remain robust across environments and weather conditions,
even when the true distribution of pedestrians shifts \citep{jiang-veitch}?
Are the shapes of a patient's cancer cells associated with the dosage of a particular medication, after controlling for demographics and disease progression \citep{zhao:distance-dists}?
These, and many more, fundamental questions can all be formulated as
problems of measuring or testing \emph{conditional independence} (CI):
given sample triples $(A, B, C)$,
we seek to reject the null hypothesis that $A \ind B \mid C$.

Classical approaches to null hypothesis testing,
however, have come under increasing criticism.
Widespread concerns about the ``reproducibility crisis'' in empirical science 
have highlighted the fragility of classical p-value-based inference under flexible data collection, multiple testing, and optional stopping
\citep[e.g.][]{forking-paths}.
In complex scientific and machine learning workflows, 
where data often arrive sequentially and analyses may adapt over time, 
more flexible tools are needed.
The theory of anytime-valid testing based on e-values and testing by betting \citep[see, e.g.][]{ramdas2025hypothesis}
provides a principled alternative
while remaining  compatible with likelihood-based reasoning and machine-learning-based estimators.

In this paper, we focus on sequential testing with an i.i.d. data stream, where obbservations arrive sequentially and the test may be
stopped at a data-dependent time.  Prior anytime-valid CI tests
\citep{shaer2023model,pandeva2024evaluating,pandeva2024davt} operate in the
Model-X setting \citep{candes:model-x-knockoffs}, assuming that the conditional
law \(P_{A\mid C}\) is known exactly.  This assumption is powerful because it
avoids the usual impossibility results for CI testing: without further
assumptions, no test can achieve nontrivial power
in either the fixed-sample \citep{shah2020hardness} or online setting \citep{waudby-smith:robbins-siegmund}.
We discuss these hardness results in \cref{sec:calibration-hard}.

Exact knowledge of \(P_{A\mid C}\), however, is often unrealistic -- especially in online settings.
We therefore propose a sequential test for CI
that works well when this conditional law is known,
but is designed to remain effective when the relevant
conditional distributions must be estimated online.
Since exact false-rejection control is not possible in full generality without additional assumptions,
we provide theoretical and empirical evidence that the method can
retain good Type~I error control and reasonable power in this estimated-conditional
regime.

In developing this test,
we also formalize a principle for choosing test
statistics that accumulate evidence quickly under weak signal. 
This principle is not specific to CI testing and may be
useful more broadly for testing by betting.

\section{Background}
\label{sec:background}
We begin by establishing the formal framework for sequential hypothesis testing through the betting paradigm, which reformulates statistical testing as an online game.

\subsection{Sequential Hypothesis Testing via Betting}
Consider a stream of i.i.d.~observations $(Z_t)_{t \geq 1}$,
where each $Z_t \in \Z$ arrives sequentially. 
Our goal is to test a null hypothesis $\Hzero$ regarding the distribution of this sequence.
Unlike fixed-sample tests,
a sequential test may reject at any data-dependent time.  Its Type I error
therefore measures the probability of ever rejecting \(\Hzero\) under the null.

The framework of \textit{testing by betting} \citep[overviewed by][]{ramdas2025hypothesis} reinterprets hypothesis testing as a game where a player attempts to accumulate wealth by betting against the null. The game is constructed such that no strategy can systematically profit if $\Hzero$ is true; conversely, a successful betting strategy under the alternative $\Hone$ yields capital growth, making the accumulated wealth a direct and intuitive measure of evidence against the null.

A player starts with an initial wealth $W_0 = 1$. 
At each round $t$, before observing \(Z_t\), 
the player chooses a \textit{payoff function} $f_t: \Z \to [-1, \infty)$ 
designed to capture the discrepancy between $\Hzero$ and $\Hone$, 
and bets a fraction $\lambda_t \in [0, 1]$ of their wealth; 
both $f_t$ and $\lambda_t$ should be 
\(\F_{t-1}\)-measurable, where
\(\F_{t-1}=\sigma(Z_1,\ldots,Z_{t-1})\) is the sigma-algebra generated by the observations available before round \(t\). 
Upon observing $Z_t$, their wealth becomes
\[
    W_t = W_{t-1}(1 + \lambda_t f_t(Z_t)).
\]
The test will be valid if
$\E_{\Hzero}[f_t(Z_t) \mid \F_{t-1}] \leq 0$,
as this makes the wealth process $(W_t)_{t \geq 0}$ a non-negative \textit{supermartingale} under $\Hzero$:
$\E_{\Hzero}[W_t \mid \F_{t-1}] \leq W_{t-1}$.
This allows for the application of Ville's inequality \yrcite{ville1939},
showing that for any $\alpha \in (0, 1)$,
\[
    \Pr\nolimits_\Hzero(\exists t \geq 1: W_t \geq 1/\alpha) \leq \alpha \E_\Hzero [W_0] = \alpha.
\]
Thus we construct an \emph{anytime valid} test of level $\alpha$
by rejecting the null
at the stopping time $\tau = \inf \{t \geq 1 : W_t \geq 1/\alpha\}$.
Among valid tests, we would like ones which choose $f_t$ and $\lambda_t$ so that $W_t$ grows rapidly under the alternative $\Hone$,
e.g.\ by maximizing the e-power $\E_{\Hone} [\log W_t]$.

\subsection{Designing Payoff Functions}

In most existing literature, $f_t$ is constructed by contrasting a statistic $g_t(\cdot)$ evaluated on the observed data $Z_t$ against the same statistic evaluated on a null-calibrated sample $\tZ_t$,
\[
f_t(Z_t) = g_t(Z_t) - g_t(\tZ_t),
\]
where $\tZ_t$ is constructed from $Z_t$ in a way such that $\E[g_t(\tZ_t)\mid \F_{t-1}] = \E_\Hzero[g_t(Z_t)\mid \F_{t-1}]$.

For instance, in \emph{two-sample testing},
$Z_t = (A_t, B_t)$ for stationary and independent $A$ and $B$;
we wish to test the null hypothesis that the distribution of $A$ equals that of $B$.
Then we might use $\tZ_t = (B_t, A_t)$,
since under $\Hzero$ the two distributions are the same \citep{shekhar2023nonparametric}.

We would like to select $g$ to distinguish the distributions as confidently as possible under $\Hone$,
e.g.\ by choosing a function class $\G$ and attempting to pick
\begin{equation}
\E[f^*(Z_t, \tZ_t)] = \sup_{g \in \G} \left( \E[g(Z_t)] - \E[g(\tZ_t)] \right).
\label{eq:optimal-power}
\end{equation}
If $-g \in \G$ for all $g \in \G$, the right-hand side is an integral probability metric \citep{mueller:ipm}, a common way to quantify distributional discrepancies.

\paragraph{Current Paradigms and Limitations}
\citet{shekhar2023nonparametric} and \citet{podkopaev2023sequential} restrict $\mathcal{G}$ to the unit ball of a Reproducing Kernel Hilbert Space (RKHS), where \eqref{eq:optimal-power} becomes the Maximum Mean Discrepancy (MMD; \citealp{gretton:mmd-jmlr}) between the distributions of $Z_t$ and $\tZ_t$.
\citet{shaer2023model} use the squared prediction error of a model trained on
past data as one example of \(g\). DAVT \citep{pandeva2024davt} instead chooses a neural network class for $\G$.

In all of these cases, the power depends on both the underlying distributional discrepancy and the quality of the chosen statistic $g$.
If the discrepancy is small, even a well-chosen statistic leads to slow wealth growth.
This problem cannot be dodged by arbitrarily scaling
\(\mathcal G\), since level control via Ville's inequality requires a nonnegative wealth
process and hence \(f_t(Z_t, \tZ_t)\ge -1\) almost surely.

\subsection{Challenges in Null Calibration for CI Testing}\label{sec:calibration-hard}
For two-sample testing or unconditional independence testing ($A \ind B$),
exact null-calibrated samples $\tZ$ can be easily obtained via permutation of the samples.
For conditional independence (CI) testing, where we ask if $A \ind B \mid C$,
generating $\tZ$ is substantially more difficult.
This is especially true if $C$ is continuous,
in which case we would typically observe only a single $(A, B)$ pair for any given value of $C$.

\citet{shah2020hardness} prove that no test with exact Type~I error control 
for all continuous null distributions can have
nontrivial power.
For any testing procedure, any sample size,
and any Lebesgue-continuous joint distribution of $(A, B, C)$ with $A \nind B \mid C$,
there exists another continuous joint distribution $(A', B', C')$ satisfying $A' \ind B' \mid C'$
that is indistinguishable to the test at this number of samples,
even if all distributions are restricted to have bounded support.
(The dependence can be hidden in ``hard-to-find'' features of the
conditioning variable $C$, such as lower-order bits.)

\citet{waudby-smith:robbins-siegmund} extend this hardness phenomenon
to sequential testing, showing that anytime-valid CI tests cannot guarantee
nontrivial power uniformly over all alternatives, even if the test is forced to
wait for an arbitrarily large number of observations before stopping.\footnote{%
It is worth briefly noting that these hardness results do not
necessarily rule out \emph{non-uniform} asymptotic control.
\citet{gyorfi-walk} claim their test has this property,
but their proof is incorrect, as pointed out by \citet[Section 1.1]{minimax-ci}
-- although note that \citeauthor{minimax-ci}'s statement that the result ``would otherwise seem to contradict'' \citet{shah2020hardness} is itself incorrect.
Whether there exists a procedure with such control is currently an open question;
\citet{boeken2026topological} show a partial negative result.}

To avoid these impossibility results, much of the current CI testing literature
imposes additional structure.
The most common example is the \emph{Model-X assumption}, which assumes the conditional distribution $P_{A \mid C}$ 
is known \cite{candes:model-x-knockoffs,shaer2023model,pandeva2024davt,grunwald2024anytime}.
Under this assumption, one can sample $\tA \sim P_{A \mid C}$
and form $\tZ=(\tA, B, C)$, which follows the null distribution by construction.
Other methods instead assume that \(P_{A,C}\)  is invariant across the null and alternative \cite{pandeva2024evaluating}. 
With such additional structure, exactly valid CI tests can be constructed.

In practice, we rarely have \emph{perfect} knowledge of the distribution of $A \mid C$.
One might instead estimate this conditional distribution from an auxiliary sample of
\((A,C)\) pairs, so that \(\tZ\) only
approximately follows the null. 
A sufficiently powerful test -- whether because it has many test observations
or because it identifies a strong distinguishing statistic \(g\) -- may 
then detect the mismatch between \(Z\) and the incorrectly
generated \(\tZ\), and reject even when \(A\ind B\mid C\).
\citet{pogodin2024splitkci} and \citet{he2025hardness} study related calibration failures
for conditional testing in the batch setting.
The online setting is even more demanding, since validity must hold across an unbounded number of stopping
opportunities, and approximation error in \(\tZ\) must remain small enough
to not become detectable as arbitrarily more observations arrive.
Moreover, the auxiliary-data regime itself can be
unrealistic in applications where \((A,B,C)\) triples arrive only online.

In this work, we avoid constructing an explicit null-calibrated sample \(\tZ\). 
Instead, we use payoffs of the form
\(
    f_t(Z_t) = g_t(Z_t;\gamma_t),
\)
where $\gamma_t$ is a data-dependent shift chosen to make the resulting wealth process approximately 
a supermartingale under the null.

\section{Methodology}
\label{sec:method}
In this section, we present our sequential testing framework for conditional independence. 
We construct betting payoffs that are designed to promote rapid wealth growth under alternatives while mitigating Type~I error inflation beyond the exact Model-X setting.

Consider a stream of i.i.d.\ observations $Z_t = (A_t, B_t, C_t)$, each in a space $\Z = \A \times \B \times \C$, arriving in batches. 
To ensure that the payoff function and the betting fraction are predictable ($\F_{t-1}$-measurable), we adopt a sequential data partitioning scheme that separates model estimation, calibration, and testing. At each round t, the observed data are divided into three disjoint subsets:
\begin{itemize}[topsep=0pt,leftmargin=*]
\item \textbf{Training set} $(\Xtr_{t-1})$:
Used to estimate the data-dependent quantities appearing in the test statistic.
This set grows monotonically over time.

\item \textbf{Validation set} $(\Xval_{t-1})$:
A held-out buffer used to determine calibration quantities required by the test.

\item \textbf{Test batch} $(\Y_t)$:
Fresh observations used to update the wealth process $W_t$.
\end{itemize}

Let $\H_{t-1} = \Xtr_{t-1} \cup \Xval_{t-1}$
denote the historical data available prior to observing $\Y_t$, and define the
associated filtration $\F_{t-1} = \sigma(\H_{t-1})$.
We update the wealth according to
\begin{equation}
W_t = W_{t-1}(1 + \lambda_t V_t), \quad \lambda_t \in (0,1),
\end{equation}
where $V_t$ denotes the payoff used at round $t$, corresponding
to $f_t(\Y_t)$ in the notion of payoff functions from \cref{sec:background}.
If $\lambda_t$ is $\F_{t-1}$-measurable,
and $\E_{\Hzero}[ V_t \mid \F_{t-1} ] \le 0$,
then the wealth process is a supermartingale under $\Hzero$.

Before the next batch,
we update the data partitions as
\begin{align*}
\Xtr_{t} \gets
\Xtr_{t-1} \cup \Xval_{t-1}, \qquad
\Xval_{t} \gets \Y_{t-1}.
\end{align*}
The batch of data observed at timestep $t$ is used first as $\Y_t$,
then as $\Xval_{t+1}$,
and then for all $t' \ge t + 2$ is included in $\Xtr_{t'}$.

\paragraph{A Self-Normalized Payoff}
We will build our measure of discrepancy between  $\Hzero$ and $\Hone$
from a symmetric kernel\footnote{``Kernel'' is an extremely overloaded word in machine learning and statistics. We use it here in the sense of a $U$-statistic kernel, for which the only requirement is symmetry $h(z, z') = h(z', z)$. Our choice will also turn out to be the kernel of a reproducing kernel Hilbert space, but that is not necessary for our normalization.}
$h : \Z \times \Z \to \R$ designed to capture conditional independence.
Ideally, $h$ vanishes on average under $\Hzero$,
$\E_{\Hzero}[h(Z, Z') \mid Z] = 0$,
and takes large values under $\Hone$.

We then construct our ``raw'' betting score by comparing the new data to historical observations with $h$,
using a form that can be described as a cross U-statistic \citep{kim-ramdas:cross-u}.
Given a training history $\Xtr_{t-1} = \{x_i\}_{i=1}^n$ and a fresh batch
$\Y_t = \{y_j\}_{j=1}^b$, let
\begin{align*}
U_{n,b}(\Xtr_{t-1}, \Y_t)
\defeq \frac{1}{nb} \sum_{i=1}^n \sum_{j=1}^b h(x_i, y_j).
\end{align*}
The ``cross'' structure is particularly convenient for taking conditional expectations.
Conditional on the past data $\F_{t-1}$, 
the reference points $\Xtr_{t-1}=\{x_i\}_{i=1}^n$ are fixed, 
while the new observation $\Y_t$ is independent of $\F_{t-1}$. Hence
\[
\E\left[
U_{n,b}(\Xtr_{t-1}, \Y_t)
\mid
\F_{t-1}
\right]
=
\frac{1}{n} \sum_{i=1}^n
\E[h(x_i, Z) \mid F_{t-1}]
,\]
where $Z$ denotes an independent draw from the data-generating distribution.
Under $\Hzero$, this conditional mean is ideally zero regardless of $x_i$.

When the discrepancy between $\Hzero$ and $\Hone$ is weak (or $h$ is poor),
however,
the magnitude of $U_{n,b}$ can be small,
leading to slow wealth growth under the alternative.
To adapt to the unknown scale of the kernel interaction, we introduce the
statistic $S_n$, computed entirely from historical data:
\[
S_n(\Xtr_{t-1})
\defeq
\frac{1}{n^2} \sum_{i=1}^n \sum_{j=1}^n h(x_i, x_j).
\]
The V-statistic $S_n$ is $\F_{t-1}$-measurable,
and so we can easily incorporate it into our bets.
We use
\begin{equation}
\Vr_t\defeq
\frac{U_{n,b}(\Xtr_{t-1}, \Y_t)}
     {S_n(\Xtr_{t-1}) + \varepsilon},
\quad \varepsilon > 0
\label{eq:V_raw}
.\end{equation}
With large $n$ and $b$, both $U_{n,b}$ and $S_n$ converge to $\E h(X,Y)$.
Thus, for small $\varepsilon$, $\Vr_t \approx 1$ under alternative distributions, 
independently of the scale of $h$.
Moreover, if
\(
    \E_{\Hzero}[h(x_i,Z)\mid \F_{t-1}] = 0
\)
, then the numerator is conditionally mean zero under the null. 
Since the denominator is $\F_{t-1}$-measurable, 
the normalized $\Vr_t$ remains conditionally mean zero, 
and the resulting wealth process is a martingale under the null.
The regularization parameter $\varepsilon$ prevents instability when the denominator is close to zero, but should be much smaller than $\E_\Hone h(X, Y)$ to preserve power.

\paragraph{The KCI Operator}
We now specify our choice of $h$ via the
Kernel-based Conditional Independence (KCI) framework \cite{zhang2012kernel}, which
provides a principled RKHS representation of conditional independence.

Map $\A$, $\B$, and $\C$ into reproducing kernel Hilbert spaces
$\H_A$, $\H_B$, and $\H_C$ with feature maps
$\phia$, $\phib$, and $\phic$, respectively.
Throughout, we assume that all RKHSs considered are separable and that the
feature maps are measurable.
The conditional mean embeddings
$\muac(c) \defeq \E[\phia(A) \mid C = c]$ and
$\mubc(c) \defeq \E[\phib(B) \mid C = c]$
represent the components of $A$ and $B$ explained by $C$.

Following \citet{he2025hardness}, we construct the KCI operator
\(\psi(Z)\), which captures the residual interaction between $A$ and $B$ after
conditioning on $C$:
\begin{align*}
\psi(z)\!=\!\bigl(\phia(a)\! - \!\muac(c)\bigr) \otimes \bigl(\phib(b) \! - \! \mubc(c)\bigr) \!\otimes \!\phic(c).
\end{align*}
Intuitively, $\psi(Z)$ encodes the dependence between the residuals of $A$ and $B$
after removing the effect of $C$, weighted by the representation of $C$.
Under the null,
the residualized features are conditionally
uncorrelated given \(C\), and hence
\(
    \E_{\Hzero}[\psi(Z)] = 0 .
\)
Under suitable universality conditions on the kernels,
\(\E[\psi(Z)]\neq 0\) for any violation of conditional independence.
We therefore define the kernel by
\begin{equation}
h(Z, Z') \defeq \langle \psi(Z), \psi(Z') \rangle.
\end{equation}
Then, for any fixed $Z$,
\[
\E_{\Hzero}[h(Z, Z') \mid Z]
=
\langle \psi(Z), \E_{\Hzero}[\psi(Z')] \rangle
= 0 .
\]
So far, we have treated the component kernels as fixed in advance and the
conditional mean embeddings as known population quantities.  
In a practical sequential setting, however, the
kernel may be updated as more data become available.
Outside the Model-X setting, the conditional mean embeddings \(\muac\) and \(\mubc\) are
unknown and must be estimated from historical data \(\Xtr\).

At each round $t$, we construct the kernel $h^{(t)}$ using conditional mean
embeddings estimated from the historical training data $\Xtr_{t-1}$.
Consequently, \(h^{(t)}\) is allowed to depend on all past
information, but remains \(\F_{t-1}\)-measurable.
The round-$t$ payoffs $\Vr$, and $V_t$ to be described next, are then computed from
$(\H_{t-1}, \Y_t)$ using this kernel.

\paragraph{The Shift-and-Truncate Mechanism}
To apply Ville's inequality,
the betting payoff must satisfy two properties: $V_t \ge -1$ almost surely 
and $\E_{\Hzero}[V_t \mid \F_{t-1}] \le 0$ for every round $t$.
Although the kernel \(h\) is bounded, the raw
self-normalized score \(\Vr_t\) defined in \eqref{eq:V_raw} need not be bounded
below by \(-1\).  In particular, fluctuations in the normalization term can
produce large negative values, which would make the corresponding wealth update
invalid.

We therefore define the payoff by applying a predictable shift $\gamma_t$
followed by a one-sided truncation:
\begin{equation} 
V_t
\defeq
\max\{ \Vr_t - \gamma_t,\,-1 \}.
\label{eq:truncate_shift}
\end{equation}
The truncation enforces \(V_t \ge -1\), ensuring that the wealth process remains
nonnegative.  However, truncation alone can increase the conditional mean of the
payoff.  To preserve null calibration, we choose the smallest nonnegative shift
\(\gamma_t\) such that
\begin{equation}\!\!\!
\gamma_t
\defeq
\min_{\gamma \ge 0}\Bigl\{
\gamma :
\E_{\Hzero}\!\left[
\max\{ \Vr_t \!-\! \gamma,\!-1 \}
\!\mid\!
\F_{t-1}
\right]
\le 0
\Bigr\}.
\label{eq:gamma_def}
\end{equation}
This defines an ideal predictable shift whenever the corresponding null expectations
can be evaluated exactly.

Under \(\Hone\), when the kernel is aligned with the conditional dependence
signal, the raw score \(\Vr_t\) has positive conditional mean and is less likely
to fall far below zero.  In this regime, the truncation is rarely active and the
required shift \(\gamma_t\) is relatively small.

Thus, the mechanism is
designed to retain the power benefits of the self-normalized payoff.  Exact
anytime-valid Type~I error control, however, is guaranteed for the ideal version
in which the conditional null expectation in \eqref{eq:gamma_def} is computed
exactly. In practice, we approximate the expectation using the scheme described below.

\paragraph{Gaussian Approximation for Shift Estimation}
The ideal shift in \eqref{eq:gamma_def} depends on the
conditional null distribution of the raw score \(\Vr_t\).  Since this
distribution is not available in general, we approximate it by a Gaussian
law.

This approximation is motivated by the structure of \(\Vr_t\).    
For the round $t$ test batch $\Y_t=\{y_j\}_{j=1}^b$, let
$\Xtr_{t-1}=\{x_i\}_{i=1}^n$ denote the historical training samples. Define 
\[
g^{(t)}(y)
\defeq
\frac{1}{n\bigl(S_n(\Xtr_{t-1})+\varepsilon\bigr)}
\sum_{i=1}^n h^{(t)}(x_i,y).
\]
Then the raw score on the test batch can be written as
\[
  \Vr_t = \frac{1}{b}\sum_{j=1}^b g^{(t)}(y_j).
\]
Conditional on the past information \(\F_{t-1}\), 
each of
the kernel \(h^{(t)}\), the training points
\(\{x_i\}_{i=1}^n\),  and the normalization
\(S_n(\Xtr_{t-1})+\varepsilon\) are fixed.
Hence, \(g^{(t)}\) is \(\F_{t-1}\)-measurable, and \(\Vr_t\) is a normalized average of
test-sample contributions.  As the test samples in the batch are independent,
if \(b\) is sufficiently large,
by the central limit theorem
\[
    \operatorname{Law}\bigl( \Vr_t \mid \F_{t-1} \bigr) \approx \N(\mu_t,\sigma_t^2).
\]
We then approximate the shift by replacing the unknown conditional null law of
\(\Vr_t\) in \eqref{eq:gamma_def} with this Gaussian law.
Under this approximation,
the conditional null expectation in
\eqref{eq:gamma_def} can be evaluated in closed form.
For a random variable
\(V\sim \N(\mu,\sigma^2)\), define
\[
    f(\gamma;\mu,\sigma)
    \defeq
    \E_{V\sim \N(\mu,\sigma^2)}\bigl[\max\{V-\gamma,-1\}\bigr].
\]
Defining
\(
    \xi \defeq \frac{\gamma-\mu-1}{\sigma}
,\)
we can directly calculate that
\[
    f(\gamma;\mu,\sigma)
    =
    \sigma
    \left[
        \phi(\xi)
        -
        \xi\Phi(-\xi)
    \right]
    -1,
\]
where \(\phi\)
denotes the density of a standard normal
and \(\Phi\) its cumulative distribution function.
The Gaussian plug-in shift is then chosen as the smallest nonnegative value of
\(\gamma\) for which \(f(\gamma;\mu_t,\sigma_t)\le 0\).

It remains to specify the plug-in parameters \(\mu_t\) and \(\sigma_t^2\).  If
the conditional mean embeddings used in \(h^{(t)}\) are exact, then the raw
score is centered under \(\Hzero\).  With estimated conditional mean
embeddings, this centering is only approximate,
but it is difficult to accurately estimate the true null mean;
we therefore use the approximation
\(
    \widehat \mu_t = 0
\).

Under the approximation \(\widehat \mu_t = 0\),  the conditional variance of
of the normalized raw score satisfies
\begin{align*}
    \Var_{\Hzero}[\Vr_t\mid \F_{t-1}]
    &= \frac{1}{b} \Var_{\Hzero}\left[ g^{(t)}(Y) \mid \F_{t-1}\right]
    \\
    &\approx \frac{1}{b} \E_{\Hzero}[(g^{(t)}(Y))^2\mid \F_{t-1}],
\end{align*}
where \(Y\) denotes an independent null sample.

During shift estimation, 
after these quantities have been fixed, 
we estimate null scale using the
validation set \(\Xval_{t-1}\)
and use this estimate to calibrate the final shift \(\hgamma_t\).

Let \(\{v_j\}_{j=1}^b \subseteq \Xval_{t-1}\) denote the validation points.  We define
\begin{equation}
    \widehat\sigma_t^2
    \defeq
    \frac{1}{b^2}
    \sum_{j=1}^b \bigl(g^{(t)}(v_j)\bigr)^2 .
    \label{eq:sigma-estimate}
\end{equation}
This separation reduces the bias that would arise from optimizing the score and estimating its
null scale on the same training samples.
The practical shift is then chosen as
\begin{equation}
    \widehat\gamma_t
    \defeq
    \min\Bigl\{
        \gamma \ge 0 :
        f(\gamma;0,\widehat\sigma_t) \le 0
    \Bigr\}.
    \label{eq:gamma-gaussian}
\end{equation}
Since \(f(\gamma;0,\widehat\sigma_t)\) is monotone nonincreasing in \(\gamma\),
\(\widehat\gamma_t\) can be found efficiently by binary search.  Restricting to
\(\gamma\ge 0\) is not strictly necessary,
but makes the correction more conservative.

\begin{algorithm}[t]
\caption{Sequential KCI Testing via Betting}
\label{alg:skb}
\begin{algorithmic}[1]
\STATE \textbf{Input:} Initial training set $\Xtr_{0}$, initial validation set $\Xval_0$,
stream of batches $\{\mathcal{Y}_t\}_{t=1}^\infty$ of size $b$, threshold $1/\alpha$, regularization $\varepsilon$.
\STATE \textbf{Initialize:} $W_0 \gets 1$, $\eta_0 \gets 0$, $\hgamma_0 \gets 0$, kernel $h^{(0)}$.
\FOR{$t = 1, 2, \dots$}

    \STATE \COMMENT{\textit{Phase 1: Conditional Means Estimation}}
    \STATE Select kernel hyperparameters and fit
$\muac^{(t)}$ and $\mubc^{(t)}$ via kernel ridge
regression on $\Xtr_{t-1}$.
    \STATE \COMMENT{\textit{Phase 2: Strategy Optimization \& Shift Estimation}}
    \STATE Set
    $\eta_t^{(0)} \gets \eta_{t-1}$,
    $h^{(t,0)} \gets h^{(t-1)}$,
    $\hgamma_t^{(0)} \gets \hgamma_{t-1}$.
    \FOR{$s = 1, \dots, S$} 
        \STATE Take a gradient step to update $\eta_t^{(s)}$ and $h^{(t,s)}$ (in the parameters of $k_C$ only) to maximize
        \[
        \displaystyle\sum_{i=1}^t
        \log\!\left(
        1 + \bm{\sigma}(\eta_t^{(s)})\,
        \!\max\{\tV_i^{(t,s)} - \hgamma_t^{(s-1)}, -1\}\!
        \right)
        \]
        where historical payoff proxies $\{\tV_i^{(t,s)}\}_{i=1}^t$ are based on the current kernel.
        \STATE Select the shift $\hgamma_t^{(s)}$ for the new kernel via binary search 
        using $\mathcal{X}_{\mathrm{tr}}^{(t-1)}$ and $\mathcal{X}_{\mathrm{val}}^{(t-1)}$.
    \ENDFOR
    \STATE Set $\eta_t \gets \eta_t^{(S)}$, $h^{(t)} \gets h^{(t,S)}$, $\hgamma_t \gets \hgamma_t^{(S)}$.

    \STATE \COMMENT{\textit{Phase 3: Wealth Update}}
    \STATE Receive new test batch $\mathcal{Y}_t$.
    \STATE Compute $\Vr_t \gets \frac{U_{n,b}^{(t)}}{S_n^{(t)} + \varepsilon}$ using $h^{(t)}$.
    \STATE Compute final payoff $V_t \gets \max\{\Vr_t - \hgamma_t, -1\}$.
    \STATE Update wealth $W_t \gets W_{t-1}\bigl(1 + \bm{\sigma}(\eta_t) V_t\bigr)$.

    \IF{$W_t \ge 1/\alpha$}
        \STATE \textbf{Reject $\Hzero$ and terminate.}
    \ENDIF
    \STATE $\mathcal{X}_{\mathrm{tr}}^{(t)} \gets \mathcal{X}_{\mathrm{tr}}^{(t-1)} \cup \mathcal{X}_{\mathrm{val}}^{(t-1)},
    \quad \mathcal{X}_{\mathrm{val}}^{(t)} \gets \mathcal{Y}_t$
\ENDFOR
\end{algorithmic}
\end{algorithm}

\paragraph{Parameter Selection}
The kernel choices of KCI play two distinct roles in our procedure.  
First, the regression kernels used for conditional mean embedding estimation
should be chosen to obtain good CME estimates. 
The kernel on the conditioning variable \(C\)
also affects the sensitivity of the KCI statistic to conditional dependence; this is
particularly important in difficult CI testing problems where the signal can be
hidden by an inappropriate conditioning kernel \citep{he2025hardness}.

For the regression step, we follow \citet{pogodin2024splitkci}.  We use
fixed kernels for \(A\) and \(B\), with heuristic bandwidth choices, and select the regression kernels \(k_{C\to A}\) and
\(k_{C\to B}\) by minimizing leave-one-out prediction error.  These choices are
used to estimate the conditional mean embeddings.  Implementation details are
given in \cref{apx:implementation:kernels}.

We then choose the conditioning kernel \(k_C\) together with the betting
fraction \(\lambda_t\), since both affect the growth of the wealth process.  In
principle, the natural objective is the expected logarithmic wealth increment,
\[
    \argmax_{\lambda,k_C}
    \E_{\Hone}\!\left[\log(1+\lambda V_t)\right],
\]
which is the standard criterion for maximizing asymptotic wealth growth and is
closely connected to powerful e-value constructions
\citep[see, e.g.][Sections 3.3 and 7.8]{ramdas2025hypothesis}.  Since the alternative
distribution is unknown, we replace this population objective by an empirical
proxy computed from the historical data available before round \(t\).

At each round \(t\), we compute proxy payoffs computed with the current kernel \(h^{(t)}\).  
Since the observations are i.i.d., these proxy payoffs
provide a pre-round estimate of how the current kernel and betting fraction
would perform on fresh test batches.

To construct the proxy payoffs, we partition the \(n\) historical training
samples \(\Xtr_{t-1}\) into blocks of size \(b\), and write \(\mathcal I_i\)
for the indices in the \(i\)th block. 
Since data arrives in batches of size $b$, we have $t$ blocks in total.
For each block, we treat the samples in
\(\mathcal I_i\) as a pseudo-test batch and compare them with the remaining
historical samples.  

Specifically, we define
\begin{equation}
\tV_i^{(t)}
=
\frac{
\sum_{l \in \mathcal I_i}
\sum_{j \notin \mathcal I_i}
h^{(t)}(x_j, x_l)
}{
b (n-b) \left(S_n^{(t)}(\Xtr_{t-1}) + \varepsilon\right)
},
\quad
i \in \{1, \dots, t\}.
\end{equation}
This leave-block-out construction avoids self-interaction terms and gives an
empirical proxy for the current normalized payoff.  We then select \(k_C\) and
\(\lambda_t\) by maximizing the empirical log-wealth criterion over these proxy
payoffs.

To enforce the constraint $\lambda_t \in (0,1)$, we parameterize
$\lambda_t = \bm{\sigma}(\eta_t)$ using the logistic sigmoid function,
and jointly optimize it along with kernel parameters on
our block estimate of the expected log-wealth growth: 
\begin{equation}
\sum_{i=1}^t
\log\!\left(
1 + \bm{\sigma}(\eta_t)
\max\{\tV_i^{(t)} \! - \gamma^{(t)}, -1\}
\right).
\label{eq:wealth-obj}
\end{equation}
The overall framework for our sequential testing procedure,
which we call Sequential Kernel Conditional Independence (SKCI),
is summarized in
\cref{alg:skb}.

\section{Theoretical Analysis}
\label{sec:theory}

We analyze the statistical validity of the proposed sequential test under the
null hypothesis \(\Hzero\).  The main difficulty is that the wealth process
\(W_t\), constructed using the implemented Gaussian shift \(\hgamma_t\), is not
an exact null supermartingale.  We quantify the deviation from the ideal
supermartingale property through the conditional drift
\begin{equation} \label{eq:drift-def}
    \delta_t
    \defeq
    \E_{\Hzero}\!\left[
        \max\{\Vr_t-\hgamma_t,-1\}
        \,\middle|\,
        \F_{t-1}
    \right].
\end{equation}
If \(\delta_t\le 0\), the one-step wealth update is conservative under
\(\Hzero\), so we
derive an explicit deterministic upper bound on \(\delta_t\) and use it to
construct a compensated supermartingale.

The full proof is given in \cref{app:proof_type1}.  We summarize the argument
in three steps: first, we establish the sensitivity of the Gaussian
null-calibrating shift; second, we combine this sensitivity bound with a
finite-sample Gaussian approximation to obtain the one-step drift bound; third,
we convert the one-step drift bound into the Type~I error bound.

\subsection{Sensitivity of the Null-Calibrating Shift}

We define the Gaussian-calibrated shift using the Gaussian approximation to the
conditional null law of \(\Vr_t\).  For parameters \((\mu,\sigma)\), let
\[
    f(\gamma;\mu,\sigma)
    \defeq
    \E_{Z\sim \mathcal N(\mu,\sigma^2)}
    \bigl[\max\{Z-\gamma,-1\}\bigr].
\]
The Gaussian calibrating shift \(\gamma(\mu,\sigma)\) is defined by
\[
    f(\gamma(\mu,\sigma);\mu,\sigma)=0.
\]
In particular, \(\gamma(\bmu_t,\bsigma_t)\) denotes the Gaussian shift
corresponding to the limiting approximation
\(N(\bmu_t,\bsigma_t^2)\) of \(\Vr_t\mid\F_{t-1}\), while the implemented shift
\(\hgamma_t\) is selected using the centered Gaussian law
\(N(0,\hsigma_t^2)\).

The following lemma controls the effect of using the wrong Gaussian mean and
variance.

\begin{lemma}[Implicit Sensitivity]
\label{lem:sensitivity_main}
The map \((\mu,\sigma)\mapsto \gamma(\mu,\sigma)\) is continuously
differentiable.  Moreover, for any \((\bmu,\bsigma)\) and
\((\hmu,\hsigma)\),
\[
    \gamma(\bmu,\bsigma)-\gamma(\hmu,\hsigma)
    \le
    \bmu-\hmu+\Lambda(\bsigma)(\bsigma-\hsigma),
\]
where
\(
    \Lambda(\sigma)
    \defeq
    \frac{\phi(\xi_\sigma)}{\Phi(-\xi_\sigma)},
\)
and \(\xi_\sigma\) is the unique solution of
\(
    \sigma\{\phi(\xi)-\xi\Phi(-\xi)\}=1.
\)
Here \(\phi\) and \(\Phi\) denote the standard normal density and distribution
function.
\end{lemma}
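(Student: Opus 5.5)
The approach is to solve the calibration equation explicitly in \(\mu\), reducing everything to a one-dimensional function of \(\sigma\), and then to show that this function is convex with derivative \(\Lambda(\sigma)\). The stated inequality is then the tangent-line inequality for a convex function.

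\textbf{Step 1: reduce to one variable.} Write \(\psi(\xi)\defeq\phi(\xi)-\xi\Phi(-\xi)\). By the closed form computed in \cref{sec:method}, \(f(\gamma;\mu,\sigma)=\sigma\psi(\xi)-1\) with \(\xi=(\gamma-\mu-1)/\sigma\).

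Using \(\phi'(\xi)=-\xi\phi(\xi)\), we get \(\psi'(\xi)=-\Phi(-\xi)<0\). Hence \(\psi\) is strictly decreasing. Moreover \(\psi(\xi)\to\infty\) as \(\xi\to-\infty\) (since \(\psi(\xi)\sim-\xi\)), and \(\psi(\xi)\to 0^+\) as \(\xi\to\infty\) (since \(\psi(\xi)=\E(N-\xi)_+\) for \(N\sim\mathcal N(0,1)\)).

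So for each \(\sigma>0\) the equation \(\sigma\psi(\xi)=1\) has a unique root \(\xi_\sigma\). The calibrating shift is therefore
\[
\gamma(\mu,\sigma)=\mu+1+G(\sigma),\qquad G(\sigma)\defeq\sigma\xi_\sigma .
\]
The dependence on \(\mu\) is exactly affine with slope \(1\), which produces the term \(\bmu-\hmu\). It remains to show
\[
G(\bsigma)-G(\hsigma)\le\Lambda(\bsigma)(\bsigma-\hsigma).
\]

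\textbf{Step 2: differentiability and derivative.} Consider \(F(\sigma,\xi)\defeq\sigma\psi(\xi)-1\). We have \(\partial_\xi F=-\sigma\Phi(-\xi)\neq0\), so the implicit function theorem gives that \(\sigma\mapsto\xi_\sigma\) is \(C^1\) (indeed \(C^\infty\)). Consequently \(\gamma\) is continuously differentiable in \((\mu,\sigma)\).

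Differentiating \(\sigma\psi(\xi_\sigma)=1\) gives
\[
\xi_\sigma'=\frac{\psi(\xi_\sigma)}{\sigma\,\Phi(-\xi_\sigma)}>0 .
\]
Therefore
\[
G'(\sigma)=\xi_\sigma+\frac{\psi(\xi_\sigma)}{\Phi(-\xi_\sigma)}=\frac{\phi(\xi_\sigma)}{\Phi(-\xi_\sigma)}=\Lambda(\sigma).
\]

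\textbf{Step 3: convexity and conclusion.} The function \(x\mapsto\phi(x)/\Phi(-x)\) is the inverse Mills ratio (the standard normal hazard rate), and it is strictly increasing in \(x\). This is a classical fact, e.g.\ Sampford's inequality, which follows from log-concavity of \(\Phi(-x)\). Since \(\xi_\sigma\) is increasing in \(\sigma\), the derivative \(G'=\Lambda\) is nondecreasing, so \(G\) is convex on \((0,\infty)\).

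Convexity gives \(G(\hsigma)\ge G(\bsigma)+G'(\bsigma)(\hsigma-\bsigma)\). Rearranging yields \(G(\bsigma)-G(\hsigma)\le\Lambda(\bsigma)(\bsigma-\hsigma)\). Adding the exact \(\mu\)-difference gives the lemma; in the \(\mu\) direction the bound holds with equality.

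\textbf{Main obstacle.} The only non-routine ingredient is the monotonicity of the inverse Mills ratio. I would cite it, or prove it by noting that its derivative is \(\Lambda(\Lambda-x)\), which is positive because \(\Lambda(x)>x\). That last inequality is \(\E(N-x)_+>0\) rewritten, i.e.\ \(\psi(x)>0\) divided by \(\Phi(-x)\).

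A secondary care point is restricting to \(\sigma>0\). The statement's "for any" should be read with \(\bsigma,\hsigma>0\), and the \(\gamma\ge0\) truncation used in \eqref{eq:gamma-gaussian} is not part of this lemma.
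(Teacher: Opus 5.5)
Your proof is correct and follows essentially the same route as the paper's: both use the implicit function theorem to get $\partial\gamma/\partial\mu=1$ and $\partial\gamma/\partial\sigma=\Lambda(\sigma)$, and then use that $\xi_\sigma$ and the inverse Mills ratio are increasing, so $\Lambda$ is increasing. Writing $\gamma(\mu,\sigma)=\mu+1+\sigma\xi_\sigma$ and applying the tangent-line inequality for the convex $G$ is a tidier packaging of the paper's mean-value-theorem-plus-case-analysis step, and you also justify the existence of $\xi_\sigma$ and the Mills-ratio monotonicity, which the paper only asserts.
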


This lemma allows us to convert mean and variance mismatch in the Gaussian
calibration law into an upper bound on the excess expected payoff.

\subsection{Drift Decomposition}

The drift has two sources: a finite-block Gaussian approximation gap for
\(\Vr_t\mid\F_{t-1}\), and a Gaussian calibration mismatch from using
\(\hgamma_t\), calibrated under \(N(0,\hsigma_t^2)\) instead  \(N(\bmu_t,\bsigma_t^2)\).  Let \(\detact\) and \(\detbct\)
denote the conditional mean embedding regression errors at step \(t\).

\begin{theorem}[Drift Upper Bound]
\label{lem:drift_bound_main}
Under \(\Hzero\), assume the kernel is bounded such that $\sup_{x} {h^{(t)}(x, x)} \le \kappa$,
and \(\sup_c\norm{\phi_C(c)}\le 1\).
Assume that the conditional absolute third central moment of \(\frac{1}{n}\sum_{i=1}^n h^{(t)}(x_i,Y)\) is uniformly bounded
and that its conditional variance is uniformly nondegenerate,
so that its Berry–Esseen moment ratio is at most $\rho$.
Then for each \(t\le T\),
the drift $\delta_t$ of \eqref{eq:drift-def}
satisfies
\[
    \delta_t \le
    U_t
    \defeq
     \frac{C_1\rho}{b\, \varepsilon}
    +
    \frac{\sqrt{\kappa}}{\varepsilon}
    \norm[\big]{\detact}\norm[\big]{\detbct}
    +
    \frac{2C_2 \kappa^2}{b\, \varepsilon^2}.
\]
\end{theorem}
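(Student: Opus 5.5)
We split the drift $\delta_t$ into a Gaussian-approximation part and a calibration-mismatch part by adding and subtracting Gaussian expectations. Write $\bmu_t \defeq \E_{\Hzero}[\Vr_t\mid\F_{t-1}]$ and $\bsigma_t^2 \defeq \Var_{\Hzero}[\Vr_t\mid\F_{t-1}]$. Conditionally on $\F_{t-1}$, the objects $h^{(t)}$, $\{x_i\}$, $S_n+\varepsilon$ and $\hgamma_t$ are all fixed, so
\[
\delta_t = \Bigl(\E_{\Hzero}\bigl[\max\{\Vr_t-\hgamma_t,-1\}\mid\F_{t-1}\bigr] - f(\hgamma_t;\bmu_t,\bsigma_t)\Bigr) + f(\hgamma_t;\bmu_t,\bsigma_t).
\]
We bound the two terms separately.

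\emph{First term (Berry--Esseen).} The map $v\mapsto\max\{v-\gamma,-1\}$ is $1$-Lipschitz and nondecreasing. Hence the difference of expectations equals
\[
\int_{-1+\gamma}^{\infty}\bigl(F_{\mathcal N}(v)-F_{\Vr_t}(v)\bigr)\,dv,
\]
where $F_{\mathcal N}$ and $F_{\Vr_t}$ are the conditional distribution functions. We control this integral by a nonuniform Berry--Esseen bound, or equivalently a Wasserstein-1 CLT bound, for the average of $b$ i.i.d.\ terms $g^{(t)}(y_j)$. This gives an error of order $\rho\,\bsigma_t/\sqrt b$, possibly with the moment ratio carried in unnormalized form. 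Since $\Vr_t$ is $\frac{1}{S_n+\varepsilon}$ times an average of $\frac1n\sum_i h^{(t)}(x_i,y)$, and $S_n\ge 0$ because $h$ is a positive-definite kernel, the factor $1/(S_n+\varepsilon)\le 1/\varepsilon$ produces the term $C_1\rho/(b\varepsilon)$. The rate here is $1/b$ rather than $1/\sqrt b$, because the Wasserstein-1 bound of order $\E|X|^3/(\sqrt b\,\mathrm{Var}X)$ is applied to the scaled average, whose standard deviation already carries $1/\sqrt b$. I expect matching the exact powers of $b$ and $\varepsilon$ stated in the theorem to be the fiddliest part.

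\emph{Second term (\cref{lem:sensitivity_main}).} We have $f(\hgamma_t;0,\hsigma_t)\le 0$ by construction, and $f$ is nonincreasing in $\gamma$ with slope $-\Phi(-\xi)\ge -1$. Therefore
\[
f(\hgamma_t;\bmu_t,\bsigma_t)\le f(\gamma(0,\hsigma_t);\bmu_t,\bsigma_t)\le \gamma(\bmu_t,\bsigma_t)-\gamma(0,\hsigma_t).
\]
The second inequality holds because $f(\gamma(\bmu_t,\bsigma_t);\bmu_t,\bsigma_t)=0$ and $|\partial_\gamma f|\le 1$. If $\hgamma_t$ was clipped at $0$, the bound only improves. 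Applying \cref{lem:sensitivity_main} then gives the bound
\[
\bmu_t + \Lambda(\bsigma_t)(\bsigma_t-\hsigma_t).
\]

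For the mean, under $\Hzero$ the exact embeddings give $\E[\psi(Y)]=0$. With the estimated embeddings, write $\hat\psi(y)$ for the estimated feature and $\bar\psi \defeq \frac1n\sum_i\hat\psi(x_i)$. The population mean $\E\hat\psi(Y)$ then reduces to $\E\bigl[\detact(C)\otimes\detbct(C)\otimes\phic(C)\bigr]$, because the cross terms vanish by the conditional-mean property: residual of $A$ given $C$ times a function of $C$. Its norm is at most $\norm{\detact}\norm{\detbct}$ using $\norm{\phic}\le1$. By Cauchy--Schwarz, $|\bmu_t|\le \norm{\bar\psi}\,\norm{\detact}\norm{\detbct}/\varepsilon$, and $\norm{\bar\psi}\le\sqrt\kappa$.

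For the variance, $\Lambda(\sigma)$ is bounded on the relevant range, and $\bsigma_t-\hsigma_t\le |\bsigma_t^2-\hsigma_t^2|/\bsigma_t$. The estimate $\hsigma_t^2$ is built from squared terms bounded by $\kappa^2/\varepsilon^2$. Its deviation from $\bsigma_t^2$ is of order $\kappa^2/(b\varepsilon^2)$, once the discrepancy between the second moment and the variance (the squared mean) is absorbed and a conditional-expectation (not high-probability) statement is used. This yields $2C_2\kappa^2/(b\varepsilon^2)$, with $C_2$ absorbing $\sup\Lambda$ and constants.

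The main obstacle is this variance step. Because $\hsigma_t$ is computed on the validation set, it is $\F_{t-1}$-measurable, so it is not an expectation and must be bounded deterministically. I would handle this either by interpreting $\bsigma_t$ through the Gaussian limit and keeping only the bias from dropping the squared mean, namely $b^{-1}(\E g)^2\le \kappa^2/(b\varepsilon^2)$, or by using $\Lambda(\sigma)\sigma\lesssim$ const together with the crude bound $\hsigma_t,\bsigma_t\le \kappa/(\sqrt b\,\varepsilon)$. I would try the first route first.

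Summing the three contributions gives $\delta_t\le U_t$.
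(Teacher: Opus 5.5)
Your architecture is the paper's: you use the same split of $\delta_t$ into a Gaussian-approximation term and a calibration term. The first is handled by the 1-Lipschitz payoff and a Wasserstein-1 Berry--Esseen bound at rate $1/b$, with $1/(S_n+\varepsilon)\le 1/\varepsilon$. The second is handled by $|\partial_\gamma f|\le 1$, \cref{lem:sensitivity_main}, and the Cauchy--Schwarz bound $|\bmu_t|\le\sqrt\kappa\,\norm{\detact}\norm{\detbct}/\varepsilon$. The gap is in the variance step, where you expected it. After writing $\bsigma_t-\hsigma_t\le|\bsigma_t^2-\hsigma_t^2|/\bsigma_t$, the quantity that must be a universal constant is the \emph{ratio} $\Lambda(\bsigma_t)/\bsigma_t$, not $\Lambda$ alone. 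If you bound $\Lambda$ by its supremum, a factor $1/\bsigma_t$ remains. Since $\bsigma_t$ is itself of order $b^{-1/2}$, and no lower bound on it appears in $U_t$, this factor costs $\sqrt b$ and destroys the $\kappa^2/(b\varepsilon^2)$ form. Moreover, $\sup\Lambda$ is not universal: $\xi_\sigma\to\infty$ as $\sigma\to\infty$ and $\Lambda(\sigma)\sim\xi_\sigma$. Your fallback ``$\Lambda(\sigma)\sigma\lesssim$ const'' is false. From $1/\sigma=\phi(\xi_\sigma)-\xi_\sigma\Phi(-\xi_\sigma)\sim\phi(\xi_\sigma)/\xi_\sigma^2$ one gets $\Lambda(\sigma)\sigma\sim\xi_\sigma^3/\phi(\xi_\sigma)\to\infty$.

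The missing fact, which the paper uses, comes from substituting the defining equation of $\xi_\sigma$:
\[
\frac{\Lambda(\sigma)}{\sigma}=\frac{\phi(\xi_\sigma)^2}{\Phi(-\xi_\sigma)}-\xi_\sigma\phi(\xi_\sigma).
\]
This is continuous in $\xi_\sigma\in\R$ and tends to $0$ as $\xi_\sigma\to\pm\infty$, so it is bounded by a universal $C_2$. Then Term (II) $\le|\bmu_t|+C_2|\bsigma_t^2-\hsigma_t^2|$. The variance difference needs only the crude deterministic bound: $|g^{(t)}|\le\kappa/\varepsilon$ gives $0\le\bsigma_t^2,\hsigma_t^2\le\kappa^2/(b\varepsilon^2)$, hence $|\bsigma_t^2-\hsigma_t^2|\le 2\kappa^2/(b\varepsilon^2)$.

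Your route 1 does not reach this. It only accounts for the squared-mean bias relative to $\frac1b\E[(g^{(t)})^2]$, whereas $\hsigma_t^2$ is a random validation average whose fluctuation cannot be dropped in a deterministic bound. Once you have $\Lambda(\sigma)\le C_2\sigma$, your route 2 works directly: $\Lambda(\bsigma_t)(\bsigma_t-\hsigma_t)\le C_2\bsigma_t^2\le C_2\kappa^2/(b\varepsilon^2)$.

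A minor slip: $\partial_\gamma f\in(-1,0)$ gives $f(\hgamma_t;\bmu_t,\bsigma_t)\le\max\{\gamma(\bmu_t,\bsigma_t)-\hgamma_t,0\}$, not the bound without the $\max$. When $\gamma(\bmu_t,\bsigma_t)<\hgamma_t$, your stated inequality fails. This is harmless, since your final bound is nonnegative.
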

The first term comes from the finite-block Gaussian approximation gap, which we control using a Wasserstein Berry--Esseen bound for \(\Vr_t\); here
\(C_1>0\) is a universal constant.
The second and third terms arise from the Gaussian calibration mismatch
controlled by \cref{lem:sensitivity_main}.  The second term captures the effect
of the nonzero conditional mean \(\bmu_t\), which is induced by conditional mean
embedding estimation error and vanishes when the residualizers recover the
population conditional mean embeddings.  The third term controls the mismatch
between the Gaussian variance parameter \(\bsigma_t^2\) and the implemented
variance estimate \(\hsigma_t^2\); here \(C_2>0\) is another universal constant.

The ablations in \cref{fig:skci-ablation} are consistent with this drift upper bound:
larger batch sizes \(b\) and regularization parameters \(\varepsilon\) tend to
reduce empirical null rejection rate, as suggested by the corresponding terms in
\(U_t\).

\subsection{Finite-Sample Type I Error Inflation}

The following result translates a
one-step drift bound, such as \cref{lem:drift_bound_main},
into a Type~I error bound.
Since
\(
    \E_{\Hzero}[V_t\mid\F_{t-1}]
    =
    \delta_t
    \le
    U_t,
\)
the corrected process
\(
    \widetilde W_t
    \defeq
    \frac{W_t}{\prod_{i=1}^t(1+\lambda_i U_i)}
\)
is a nonnegative supermartingale under \(\Hzero\),
and Ville's inequality
yields the following result.

\begin{proposition}[Finite-Sample Type~I Error]
\label{thm:finite_sample_type1_main}
Let \(\alpha\in(0,1)\).  Under \(\Hzero\), suppose that for all \(t\),
\(
    \delta_t\le U_t.
\)
Then a test rejecting when
\( \displaystyle
    W_t
    \ge
    \frac1\alpha
    \prod_{i=1}^t(1+\lambda_i U_i)
\)
has anytime control of its Type~I error at level $\alpha$.
This also implies that the original test satisfies
\[
    \Pr_{\Hzero}\!\left(
        \exists t\le T:
        W_t\ge \frac1\alpha
    \right)
    \le
    \alpha
    \exp\left(
        \sum_{t=1}^T \lambda_t U_t
    \right).
\]
\end{proposition}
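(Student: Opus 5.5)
We will show that the compensated process \(\widetilde W_t = W_t/\prod_{i\le t}(1+\lambda_i U_i)\) is a nonnegative supermartingale under \(\Hzero\), apply Ville's inequality to it, and then compare the event \(\{W_t\ge 1/\alpha\}\) with a threshold crossing of \(\widetilde W_t\).

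\textbf{Step 1 (supermartingale property).} We assume, as in the setting of \cref{lem:drift_bound_main}, that each \(U_t\) is \(\F_{t-1}\)-measurable; it is deterministic there, and in any case predictable. The fraction \(\lambda_t\in(0,1)\) is also \(\F_{t-1}\)-measurable. Because \(V_t\ge -1\) and \(\lambda_t<1\), we have \(1+\lambda_t V_t>0\), so \(W_t\ge 0\). We also need \(1+\lambda_t U_t>0\). If \(U_t\ge 0\) this is immediate. More generally, \(U_t\ge\delta_t\ge -1\) because \(V_t\ge -1\); if some \(U_t<-1\) were allowed, we would replace it by \(\max\{U_t,-1\}\), which still dominates \(\delta_t\). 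Then
\[
\E_{\Hzero}[\widetilde W_t\mid\F_{t-1}]
=\frac{W_{t-1}\,(1+\lambda_t\delta_t)}{\prod_{i\le t}(1+\lambda_i U_i)}
\le \frac{W_{t-1}(1+\lambda_t U_t)}{\prod_{i\le t}(1+\lambda_i U_i)}
=\widetilde W_{t-1}.
\]
Here we pulled the predictable factors out of the conditional expectation and used \(\E[V_t\mid\F_{t-1}]=\delta_t\le U_t\) together with \(\lambda_t>0\). Since \(\widetilde W_0=1\), Ville's inequality gives \(\Pr_{\Hzero}(\exists t:\widetilde W_t\ge 1/\alpha)\le\alpha\). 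This event is exactly the rejection rule \(W_t\ge\alpha^{-1}\prod_{i\le t}(1+\lambda_iU_i)\), which proves anytime validity of the corrected test.

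\textbf{Step 2 (inflation bound for the original test).} Using \(1+x\le e^{x}\), we get \(\prod_{i\le t}(1+\lambda_iU_i)\le \exp\bigl(\sum_{i\le t}\lambda_iU_i\bigr)\). For \(t\le T\), \(W_t\ge 1/\alpha\) implies
\[
\widetilde W_t\ge \frac{1}{\alpha\exp\bigl(\sum_{i\le t}\lambda_iU_i\bigr)}.
\]
To get a single threshold over \(t\le T\), we need \(\sum_{i\le t}\lambda_iU_i\le \sum_{i\le T}\lambda_iU_i\). This holds when the \(U_i\ge 0\), which is the case for the bound \(U_t\) of \cref{lem:drift_bound_main}, whose terms are all nonnegative. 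Then \(\{\exists t\le T: W_t\ge 1/\alpha\}\subseteq\{\exists t:\widetilde W_t\ge 1/\alpha'\}\) with \(\alpha'=\alpha\exp(\sum_{t\le T}\lambda_tU_t)\).

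\textbf{Step 3 (applying Ville again).} If \(\sum_{t\le T}\lambda_tU_t\) is deterministic, or more generally almost surely bounded by a constant, Ville's inequality applied at level \(\alpha'\) yields the stated bound directly; if \(\alpha'\ge 1\) the bound is trivial.

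\textbf{Main obstacle.} The delicate point is that \(\lambda_t\) is data-dependent, so \(\sum\lambda_tU_t\) is random and cannot simply be used as a fixed threshold. We would handle this by one of two routes, trying the first before the second:
\begin{itemize}
\item[(a)] Use \(\lambda_t<1\) to bound \(\sum_{t\le T}\lambda_tU_t\le\sum_{t\le T}U_t\) and apply Ville with the deterministic quantity \(\sum_t U_t\). Here we would interpret the displayed bound with the product replaced by its worst case, or note that \(\exp(\sum_t \lambda_t U_t)\) as written should be read with \(\lambda_t\) bounded.
\item[(b)] If the statement is meant literally with random \(\lambda_t\), define the stopping time \(\tau\) of first crossing. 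Then write the probability as \(\E\bigl[\mathbf 1\{\tau\le T\}\bigr]\le \E\bigl[\alpha\,\widetilde W_{\tau\wedge T}\,\prod_{i\le\tau\wedge T}(1+\lambda_iU_i)\bigr]\), and bound the product by \(\exp(\sum_{t\le T}\lambda_tU_t)\) under the convention that this quantity is controlled almost surely.
\end{itemize}
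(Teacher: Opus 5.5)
Your proof follows the paper's argument. It uses the same compensated process \(\widetilde W_t=W_t/\prod_{i\le t}(1+\lambda_iU_i)\) and the same one-step supermartingale computation from predictability and \(\delta_t\le U_t\). It then applies Ville's inequality for the corrected test, and uses \(1+x\le e^x\) with \(U_i\ge 0\) to get a single threshold for all \(t\le T\).

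The only divergence is the last step, and there your caution is justified. The paper applies Ville's inequality directly with threshold \(\exp(-\sum_{i\le T}\lambda_iU_i)/\alpha\), which is legitimate only if this quantity is a constant. In \cref{alg:skb}, \(\lambda_t=\bm{\sigma}(\eta_t)\) is fitted to past data. Contrary to your remark, \(U_t\) is not deterministic either: it contains \(\norm{\detact}\norm{\detbct}\), which depends on the embeddings estimated from \(\Xtr_{t-1}\), so \(U_t\) is only \(\F_{t-1}\)-measurable.

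Hence your route (a) would also need almost-sure bounds on the \(U_t\), and it only gives the weaker factor \(\exp(\sum_{t\le T}U_t)\). Route (b) is the correct repair. Suppose \(\exp(\sum_{t\le T}\lambda_tU_t)\le K\) almost surely for a constant \(K\). Then on \(\{\tau\le T\}\) you have \(1\le\alpha W_{\tau\wedge T}\le\alpha K\,\widetilde W_{\tau\wedge T}\), and optional stopping at the bounded time \(\tau\wedge T\) gives \(\Pr_{\Hzero}(\tau\le T)\le\alpha K\). That is the precise content of the displayed bound. When the exponent is genuinely random, its right-hand side should be read as such an almost-sure (essential-supremum) bound, not pathwise.
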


\section{Experiments}
\label{sec:experiments}
\begin{figure}[t]
    \centering
    \includegraphics[width=\columnwidth]{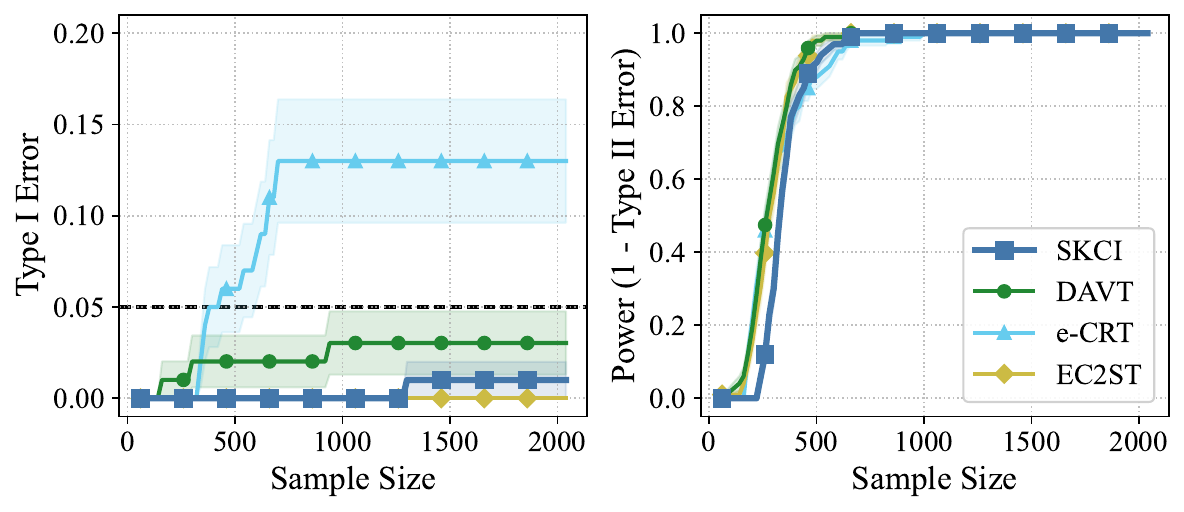}
    \caption{Linearly dependent Gaussian data in online mode.}
    \label{fig:gaussian-online}
\end{figure}

We evaluate the SKCI betting framework across synthetic and real-world benchmarks designed to evaluate both anytime Type~I error control and power for difficult problems.

Across all experiments, we compare SKCI against e-CRT \citep{shaer2023model}, DAVT \citep{pandeva2024davt},
and a version of EC2ST \citep{pandeva2024evaluating} that distinguish true $(A, B, C)$ triples from $(\widetilde A,  B, C)$ knockoffs.
We use a common batch size $b=20$, and average
the results over 100 independent runs
unless otherwise specified. Shaded regions in all plots indicate the standard error.
We use the same regression architecture for baseline methods for fair comparison; details in \cref{apx:baselines}.

To isolate the effect of estimation quality, we report results in three regimes.
(i) In Oracle mode, the conditional distribution $P_{A\mid C}$ is known exactly; this is possible for synthetic data only.
(ii) In Pretrained mode, $P_{A\mid C}$ is estimated from a large offline dataset (we use 3,000 samples).
(iii) In Online mode, no prior side data is available, and conditional mean embeddings are updated sequentially as we see data.

Due to space constraints,
Oracle results, Pretrained results
and additional configurations and ablation studies
are deferred to \cref{apx:exp-results}.

\begin{figure}[ht]
\captionsetup[sub]{skip=0pt}
\captionsetup{skip=-4pt}
\begin{subfigure}[ht]{\linewidth}
  \begin{center}
    \centerline{\includegraphics[width=\columnwidth]{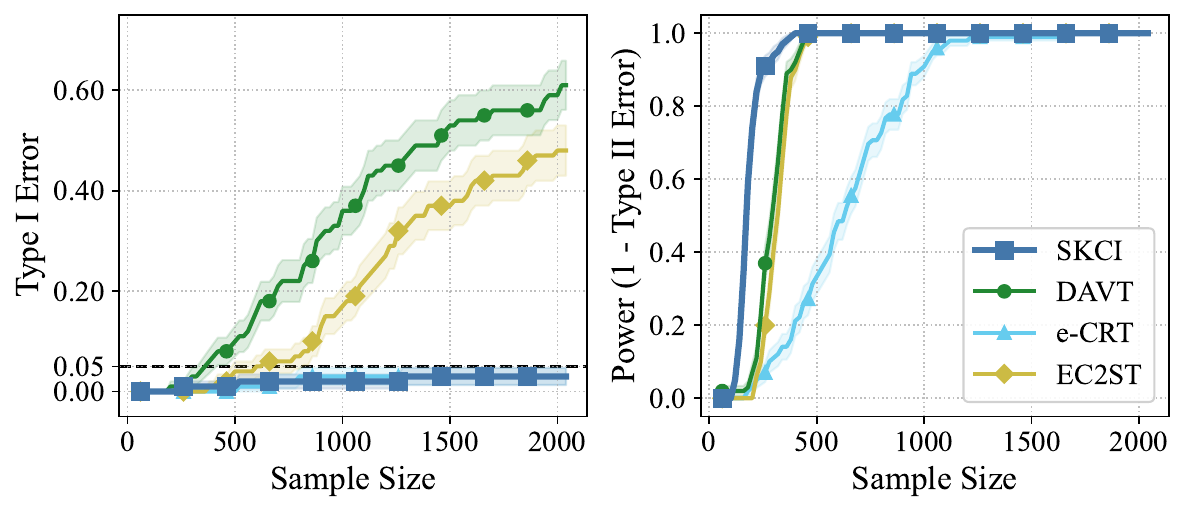}}
    \caption{
    1-dimensional data.
}
    \label{sind1-online}
  \end{center}
\end{subfigure}
\begin{subfigure}[ht]{\linewidth}
  \begin{center}
    \centerline{\includegraphics[width=\columnwidth]{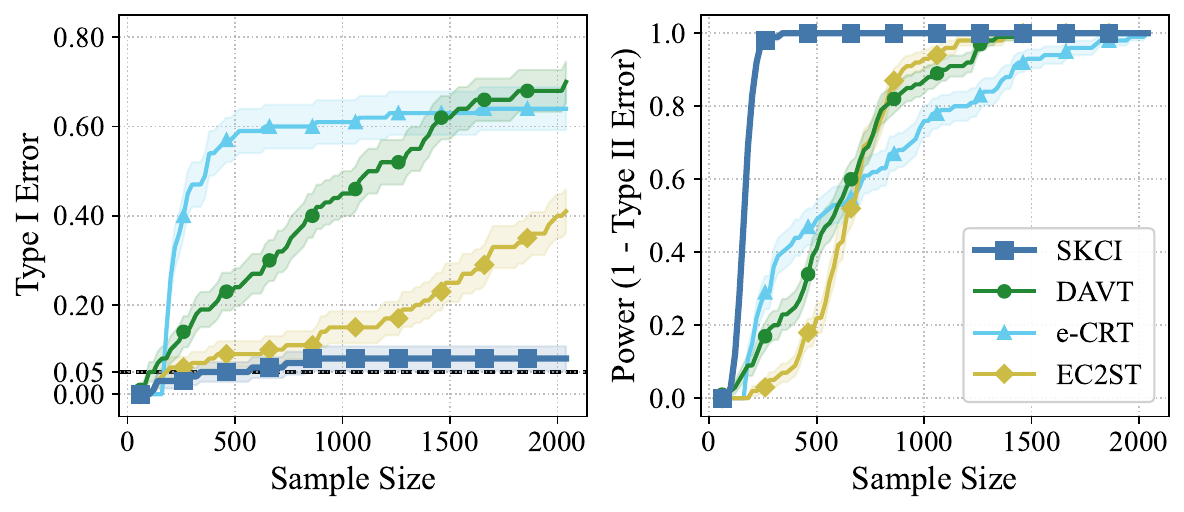}}
    \caption{
    3-dimensional data, shared coordinate.
}
    \label{sind000-online}
  \end{center}
\end{subfigure}
\begin{subfigure}[ht]{\linewidth}
  \begin{center}
    \centerline{\includegraphics[width=\columnwidth]{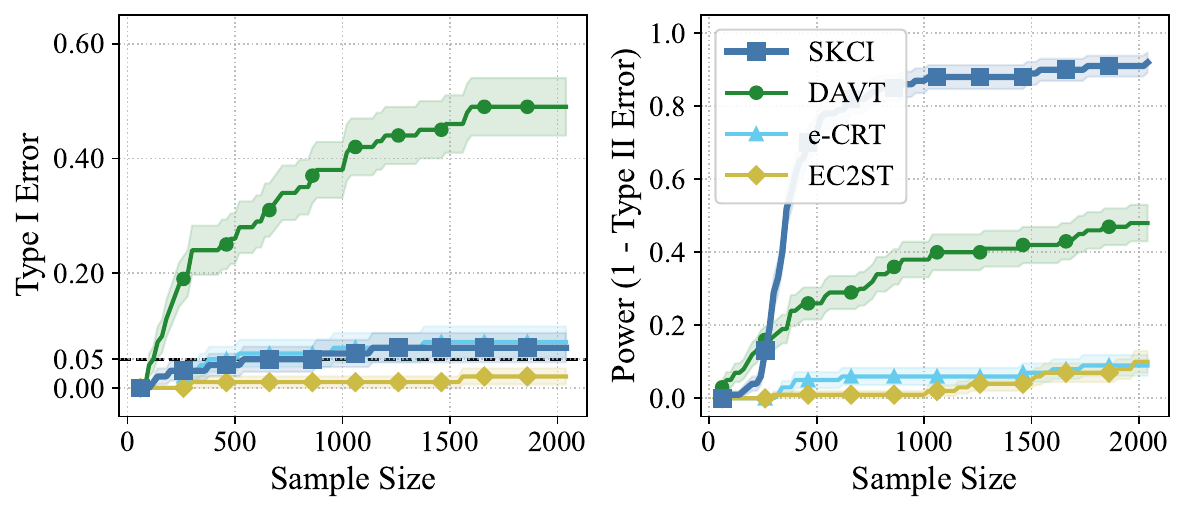}}
    \caption{
    3-dimensional data, separate coordinate.
}
    \label{sind012-online}
  \end{center}
\end{subfigure}
\caption{CI hardness data \citep{he2025hardness} in Online mode.}
\end{figure}

\paragraph{Linearly Dependent Gaussian Data}
We begin with a Gaussian benchmark commonly used in recent work on model-free conditional independence testing \citep{shaer2023model,pandeva2024davt}.
We sample $C \sim \N(0, \mathbf{I}_{19})$,
$u \sim \N(0, \mathbf{I}_{19})$, and
$A \mid (C, u) \sim \N(u^\top C, 1)$.
Under the null hypothesis, we define
$B \mid (A, C) \sim \N((w^\top C)^2, 1)$,
so that $A \ind B \mid C$ despite having strong nonlinear dependence on $C$.
Under the alternative, we introduce a linear dependence on $A$:
$B \mid (A, C) \sim \N((w^\top C)^2 + 3A, 1)$.

Figure \ref{fig:gaussian-online}, as well as \cref{gaussian-appendix} in the appendix,
show Type~I and power curves.
In Oracle mode, all methods control Type~I error and have reasonable power.
In Pretrained and Online modes, some baseline methods begin to suffer from worse Type I error rates,
while SKCI remains stable and rapidly achieves high power.

\paragraph{CI Hardness Benchmarks (1D \& 3D)}
The conditional independence hardness benchmarks of \citet{he2025hardness}
have $C$-varying dependence which is particularly difficult to detect.
Here $C \sim \mathcal N(0, I)$,
$A = \cos(e_A^\top C) + 0.1 r_A$,
and $B = \exp(e_B^\top C) + 0.1 r_B$,
where $(r_A,r_B)$ are jointly Gaussian with unit variance and conditional covariance
$\gamma(e_C^\top C)$.
Under $\Hzero$, $\gamma(c)=0$; under $\Hone$, $\gamma(c)=\sin(3c)$.

There are three configurations, of increasing difficulty:
(i) 1D, where $C \in \R$ and $e_A = e_B = e_C = 1$ (\cref{sind1-online,sind1-oracle,sind1-pretrained});
(ii) 3D with shared coordinates, $e_A = e_B = e_C$ (\cref{sind000-online,sind000-oracle,sind000-pretrained});
(iii) 3D with separate coordinates, where $e_A$, $e_B$, $e_C$ are orthonormal (\cref{sind012-online,sind012-oracle,sind012-pretrained}).

In the 1D and shared-coordinate settings, SKCI consistently matches or exceeds the power of competing methods while maintaining tight Type~I control.
In Oracle mode, all methods maintain controlled Type~I error,
but SKCI achieves the highest power;
other methods struggle with Type~I error and/or power in Pretrained and particularly Online mode.
The separate-coordinate 3D setting is substantially harder:
in all three modes,
other methods fail to detect dependence and/or suffer severe Type~I error inflation.
In contrast, SKCI's online kernel optimization successfully adapts to the relevant subspace, yielding reliable power and reasonable Type~I control even when the dependence signal is decoupled from marginal structure.

\begin{figure}[t]
    \centering
    \includegraphics[width=\columnwidth]{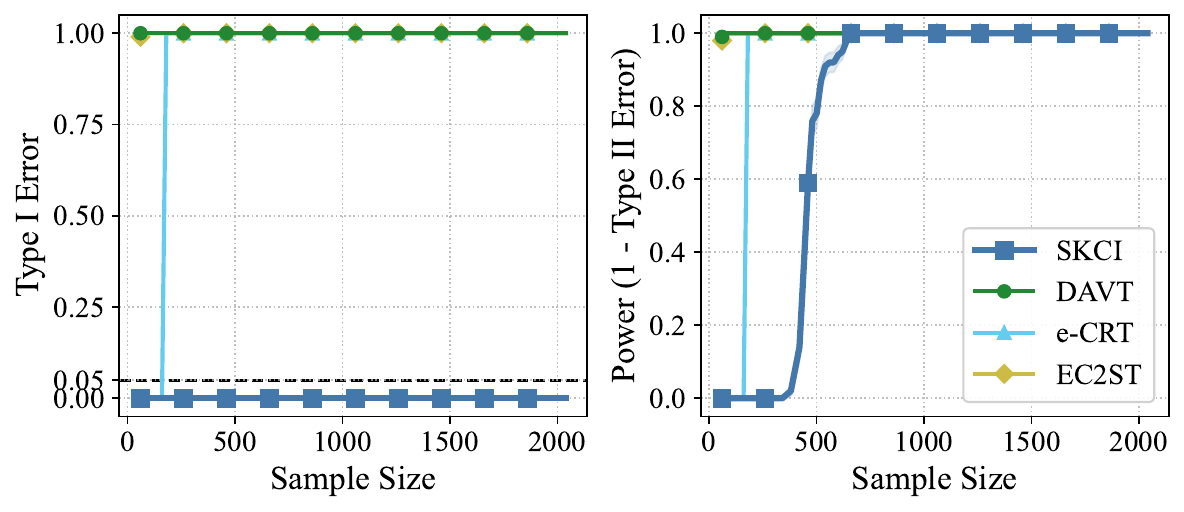}
    \caption{Synthetic neural data in Online mode.}
    \label{rats-online}
\end{figure}

\begin{figure}[h]
  \begin{center}
    {\includegraphics[width=\columnwidth]{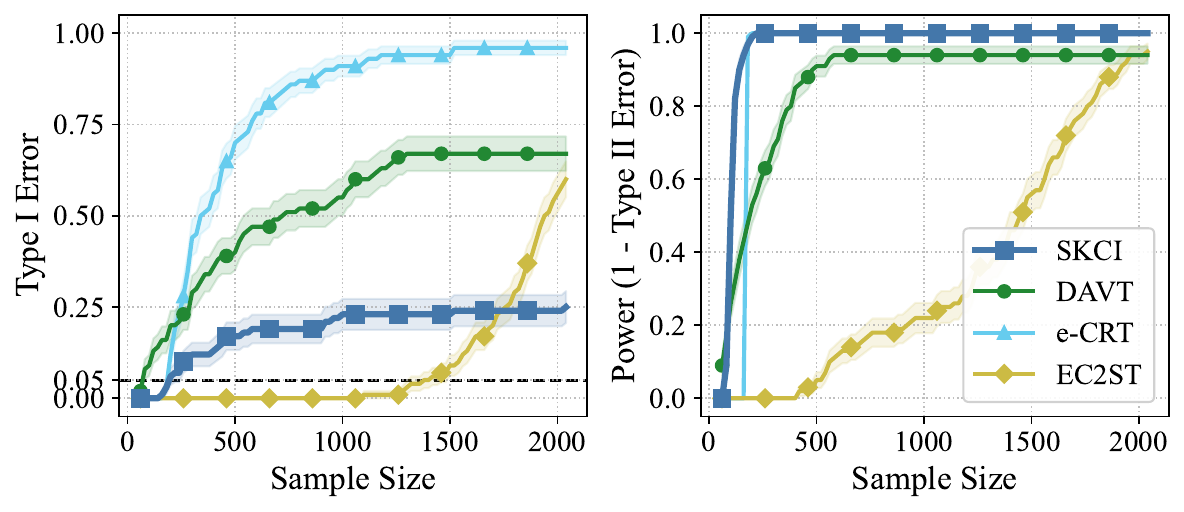}}
    \caption{
      dSprites data, Online mode.
    }
    \label{dsprites-online}
  \end{center}
\end{figure}

\begin{figure*}[h]
  \begin{center}
    \centerline{\includegraphics[width=\textwidth]{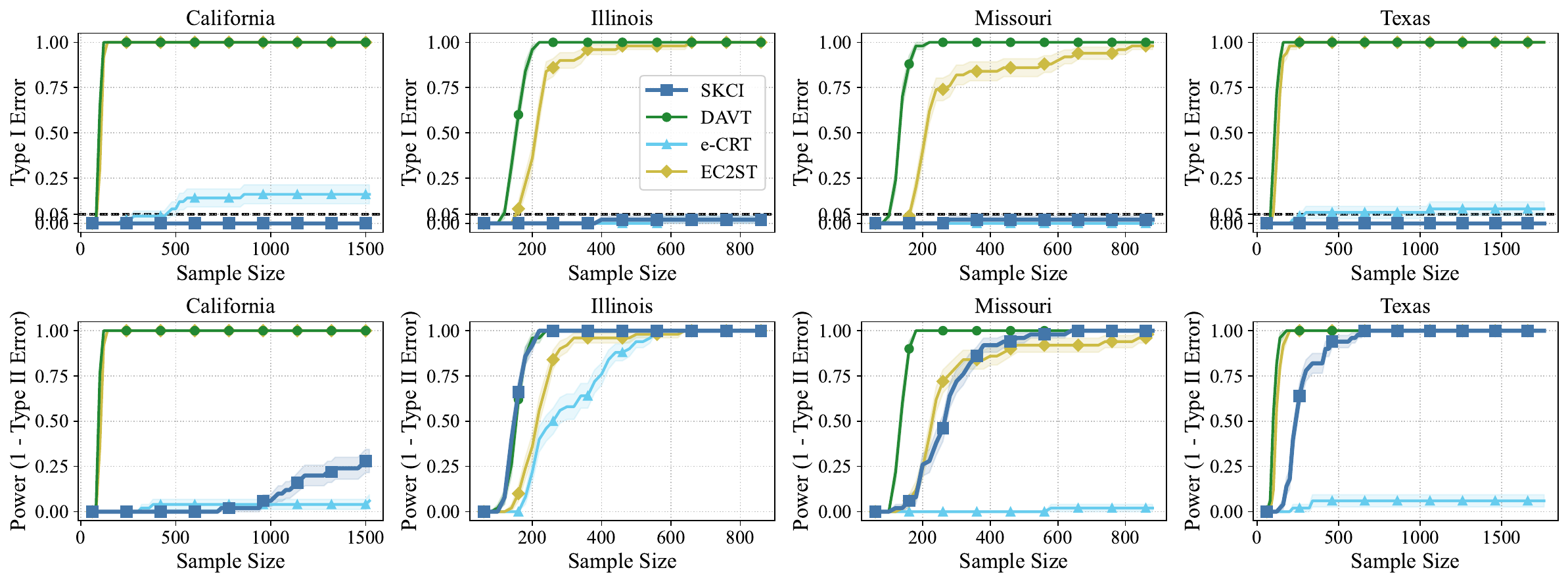}}
    \caption{
      Car insurance discrimination data, Online mode. Top rows give Type~I error, bottom rows give power for data across four states.
    }
    \vspace{-0.5em}
    \label{cars-online}
  \end{center}
\end{figure*}

\paragraph{Synthetic Neural Data (RatInABox)}
Following \citet{pogodin2024splitkci},
we evaluate SKCI in a high-dimensional, biologically motivated setting using the RatInABox simulator \citep{rat-in-a-box}.
The goal is to test whether head-direction cells ($A\in\R^{100}$) -- neurons that fire
as a function of the animal’s heading -- are conditionally independent of conjunctive
cells ($B\in\R^{100}$) -- which respond jointly to heading and spatial location --
given the animal’s physical state $C\in\R^4$ (position and orientation).
That is, we ask whether conjunctive activity is computed ``downstream'' of
head-direction signals (our $\Hone$) or is computed separately (our $\Hzero$),
despite common dependence on $C$.

\Cref{rats-pretrained,rats-online} reports results; exact distributions for Oracle mode are intractable.
In this challenging high-dimensional regime, SKCI achieves strong power while tightly controlling Type~I error.
EC2ST suffers extreme Type~I error,
as does DAVT in Online mode;
DAVT in Pretrained achieves almost no power.
e-CRT has good Type~I control but is much slower to detect dependence than SKCI.

\paragraph{Image Data (dSprites)}
Inspired by \citet{zhang2025testing},
we construct an image-based conditional independence task from dSprites \citep{dsprites17}.
Let \(A\) denote the shape of the object, \(B\) denote the full image, and
\(C\) denote a cropped view of \(B\).
When the cropped image \(C\) contains only part of the object, the full image
\(B\) carries additional information about the shape beyond \(C\),
so $A \nind B \mid C$.
If \(C\) contains the full object, \(B\) provides no additional shape information, and $A \ind B \mid C$.

\Cref{dsprites-online} shows that
SKCI maintains Type~I error substantially better than the competing baselines,
whose rejection rates quickly approach one even when the crop already contains
the full object.  In the dependent setting, all methods achieve high power with
moderate sample sizes, indicating that the main distinction on this task is null
calibration rather than sensitivity to the alternative.

\paragraph{Car Insurance Discrimination}
Finally, we apply SKCI to the car insurance dataset of \citet{angwin2017minority}, following the auditing protocol of \citet{polo2023conditional} and \citet{pogodin2024splitkci}.
We test whether insurance premiums ($A$) are conditionally independent of minority neighborhood status ($B$), given driver risk factors ($C$).
We only consider Online mode,
since data sizes are not large enough to support significant splits for Pretrained models.

To assess Type~I error, we construct a synthetic null by clustering $C$ and shuffling premiums within clusters, stratified by company and state. 
The results are computed from 50 runs per (state, company) pair, and the final decision is obtained via majority vote across companies within each state.
\cref{cars-online} reports rejection rates and power across four U.S.\ states in Online mode.
SKCI maintains conservative Type~I error across all states while achieving competitive or superior power.
DAVT and EC2ST have Type~I error rates near 1,
while e-CRT again is more controlled but underpowered.

\section{Discussion}
While {exact uniform Type~I control is impossible without assumptions},
our theory shows that SKCI's Type~I inflation can be controlled over finite
time horizons when the conditional mean embedding estimates are sufficiently
accurate. 
Although existing theoretical rates for  conditional mean embedding estimation can be slow \citep{li2022optimal,li:vvrls},
we find that, across several challenging synthetic and real-world benchmarks, 
SKCI tends to exhibit substantially better Type~I behavior
than prior methods while retaining competitive power for detecting conditional
dependence.
\section*{Acknowledgements}

The authors would like to thank Nathaniel Xu, Ilmun Kim, and the anonymous reviewers for several useful suggestions.

This work was enabled in part by support from the Natural Sciences and Engineering Research
Council of Canada, the Canada CIFAR AI Chairs Program, Calcul Québec, the BC DRI Group, and the
Digital Research Alliance of Canada.

\section*{Impact Statement}

Conditional independence testing is a tool with broad applications across machine learning and statistics, including some that are good for society and others that are more dubious.
We do not feel our work, which helps increase the reliability of an existing class of methods,
raises any particular consequences that must be discussed here.

\bibliography{refs}
\bibliographystyle{icml2026}

\newpage
\appendix
\crefalias{section}{appendix}
\crefalias{subsection}{appendix}
\crefalias{subsubsection}{appendix}
\onecolumn

\section{Proof of Type~I Error Bound}
\label{app:proof_type1}

We prove the finite-sample Type~I error bound for the sequential test under
\(\Hzero\).  Recall that the wealth process is updated as
\[
    W_t = W_{t-1}(1+\lambda_t \hV_t),
    \qquad
    \hV_t = \max\{\Vr_t-\hgamma_t,-1\},
\]
where \(\hgamma_t\) is the implemented Gaussian calibration shift.  If
\(\hgamma_t\) were the exact null-calibrating shift for the conditional law of
\(\Vr_t\mid\F_{t-1}\), then the conditional expected payoff would be zero and
the resulting wealth process would be an exact martingale.  In finite samples,
however, \(\hgamma_t\) is only an approximate shift, so the wealth process may
have a nonzero conditional drift.

We define this drift by
\[
    \delta_t
    \defeq
    \E_{\Hzero}\!\left[
        \max\{\Vr_t-\hgamma_t,-1\}
        \,\middle|\,
        \F_{t-1}
    \right].
\]
Throughout this section, all expectations and distributions are conditional on
\(\F_{t-1}\) unless stated otherwise.

The proof proceeds in four parts.  First, we decompose \(\delta_t\) into a
finite-block Gaussian approximation term and a Gaussian calibration mismatch
term.  Second, we establish the sensitivity of the Gaussian null-calibrating
shift with respect to its mean and variance parameters.  Third, we bound the
mean and variance mismatch induced by conditional mean embedding estimation and
by the implemented variance scale.  Finally, we combine these bounds to obtain
a deterministic one-step drift bound and use it to construct a compensated
supermartingale, from which the finite-sample Type~I error bound follows by
Ville's inequality.

\subsection{Decomposition of the Supermartingale Drift}
\label{apx:decomposition}

In this section, we decompose the conditional drift of the betting payoff under
the null.  The payoff depends on the null-calibrating shift \(\hgamma_t\), which
is selected using the Gaussian approximation \(N(\bmu_t,\bsigma_t^2)\).  The
implemented statistic is based on a finite block of null-calibrating samples and
on the data-adaptive kernel \(h^{(t)}\), whose construction involves estimated
conditional mean embeddings.  We therefore separate the drift into a Gaussian
calibration term and a residual approximation term.  This decomposition allows
us to identify the conditions under which the betting process remains close to
a null supermartingale.

Let \(\bE_{\Hzero}[\cdot]\) and \(\bVar_{\Hzero}[\cdot]\) denote conditional expectation and variance under the null
distribution given the filtration \(\F_{t-1}\). 
Define the conditional drift
of the truncated payoff by
\[
    \delta_t
    \defeq
    \E_{\Hzero}\bigl[\max\{\Vr_t-\hgamma_t,-1\} \mid \F_{t-1} \bigr]
    =
    \bE_{\Hzero}\bigl[\max\{\Vr_t-\hgamma_t,-1\}\bigr].
\]
We decompose this drift by adding and subtracting the corresponding expectation
under the Gaussian approximation.  
Let \(X_t\) denote a conditionally Gaussian random variable such that
\[
    X_t \mid \F_{t-1}
    \sim
    N(\bmu_t,\bsigma_t^2).
\]
Then
\begin{align}
\delta_t
&=
\underbrace{
    \bE_{\Hzero}\bigl[\max\{\Vr_t-\hgamma_t,-1\}\bigr]
    -
    \bE_{X_t\sim N(\bmu_t,\bsigma_t^2)}\bigl[\max\{X_t-\hgamma_t,-1\}\bigr]
}_{\textnormal{(I) Gaussian approximation error}}
+
\underbrace{
    \bE_{X_t\sim N(\bmu_t,\bsigma_t^2)}\bigl[\max\{X_t-\hgamma_t,-1\}\bigr]
}_{\textnormal{(II) Gaussian calibration error}} .
\label{eq:drift-decomposition}
\end{align}
The first term measures the discrepancy between the true conditional null law
of \(\Vr_t\) and  its limiting Gaussian law \(N(\bmu_t,\bsigma_t^2)\).   
This term captures finite-sample effects.  
The second term is the drift obtained when the implemented shift \(\hgamma_t\)
is evaluated under this Gaussian limiting law.  Since \(\hgamma_t\) is selected
using the centered calibration law \(\N(0,\hsigma_t^2)\), this term reflects the
calibration mismatch between \(N(0,\hsigma_t^2)\) and
\(N(\bmu_t,\bsigma_t^2)\).  The ideal finite-sample shift \(\gamma_t^*\) would
instead be calibrated against the true conditional null law of
\(\Vr_t\mid\F_{t-1}\).
The decomposition
above isolates the error due to this approximation from the error due to the
choice of the Gaussian-calibrated shift \(\hgamma_t\).

We now describe the Gaussian approximation used in
\eqref{eq:drift-decomposition}.  Conditional on \(\F_{t-1}\), the historical
training samples \(\{x_i\}_{i=1}^n\subseteq \Xtr_{t-1}\), the statistic
\(S_n(\Xtr_{t-1})\), and the kernel \(h^{(t)}\) are fixed.  For a fresh null
sample \(y_j\), define
\[
    g^{(t)}(y_j)
    \defeq
    \frac{1}{n\bigl(S_n(\Xtr_{t-1})+\varepsilon\bigr)}
    \sum_{i=1}^n h^{(t)}(x_i,y_j).
\]
Then the raw betting statistic can be written as the empirical average
\[
    \Vr_t
    =
    \frac{1}{b}\sum_{j=1}^b g^{(t)}(y_j).
\]
Under the null and conditional on \(\F_{t-1}\), the variables
\(\{g^{(t)}(y_j)\}_{j=1}^b\) are i.i.d.  We define their conditional mean and
variance by
\[
    m_t
    \defeq
    \bE_{\Hzero}\bigl[g^{(t)}(Y)\bigr],
    \qquad
    s_t^2
    \defeq
    \bVar_{\Hzero}\bigl(g^{(t)}(Y)\bigr),
\]
where \(Y\) denotes a fresh null draw.  Consequently,
\[
    \bE_{\Hzero}[\Vr_t] = m_t,
    \qquad
    \bVar_{\Hzero}(\Vr_t) = \frac{s_t^2}{b}.
\]
Thus the natural Gaussian approximation is
\[
    \Vr_t \mid \F_{t-1}
    \approx
    \N\left(m_t,\frac{s_t^2}{b}\right).
\]
Equivalently, in the notation of \eqref{eq:drift-decomposition},
\[
    \bmu_t = m_t,
    \qquad
    \bsigma_t^2 = \frac{s_t^2}{b}.
\]
By the conditional central limit theorem, provided
\(0<s_t^2<\infty\),
\[
    \frac{\sqrt b(\Vr_t-m_t)}{s_t}
    \;\xrightarrow{d}\;
    \N(0,1)
    \qquad
    \text{conditionally on } \F_{t-1}.
\]
Therefore, for large \(b\), the conditional null law of \(\Vr_t\mid\F_{t-1}\)
is approximated by
\(
    \N(\bmu_t,\frac{s_t^2}{b}),
\)
the same conditional Gaussian law used to define \(X_t\).

\paragraph{Bounding Term I (Asymptotic Gap)}
We bound the Asymptotic Gap using the Wasserstein metric. Define the test function 
$\ell(v) \defeq \max\{v-\hgamma_t, -1\}$.

\begin{lemma}[Lipschitz Continuity of the Payoff]
The function $\ell(v)$ is Lipschitz continuous with constant $L=1$.
\end{lemma}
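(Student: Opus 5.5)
I will show directly that $|\ell(v)-\ell(v')|\le |v-v'|$ for all real $v,v'$, where $\ell(v)=\max\{v-\hgamma_t,-1\}$. The idea is that $\ell$ is the composition of a translation, which is $1$-Lipschitz, with the map $u\mapsto\max\{u,-1\}$. Since $\hgamma_t$ is $\F_{t-1}$-measurable, it is a fixed constant once we condition on $\F_{t-1}$, so $\ell$ is a deterministic function in that conditional setting.

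The main step is to show that $m(u)\defeq\max\{u,-1\}$ is $1$-Lipschitz. I will use the identity $\max\{x,y\}=\tfrac12(x+y+|x-y|)$, which gives $m(u)=\tfrac12(u-1+|u+1|)$. Then
\[
|m(u)-m(u')|\le \tfrac12|u-u'|+\tfrac12\bigl||u+1|-|u'+1|\bigr|\le |u-u'|,
\]
where the second inequality is the reverse triangle inequality.

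As a backup, the same bound follows from a case split on the signs of $u+1$ and $u'+1$:
\begin{itemize}
\item If both are nonnegative, the difference is exactly $|u-u'|$.
\item If both are negative, the difference is $0$.
\item If $u\ge -1>u'$, the difference is $u+1\le u-u'$.
\end{itemize}

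Setting $u=v-\hgamma_t$ and $u'=v'-\hgamma_t$ gives $|u-u'|=|v-v'|$, so $L=1$. No step here is a real obstacle; the only point to state carefully is that $\hgamma_t$ is treated as fixed given $\F_{t-1}$.

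I will also note why this is useful in what follows. By Kantorovich--Rubinstein duality, Term~I in \eqref{eq:drift-decomposition} is then bounded by the Wasserstein-1 distance between $\operatorname{Law}(\Vr_t\mid\F_{t-1})$ and $N(\bmu_t,\bsigma_t^2)$, which is the distance controlled by the Wasserstein Berry--Esseen bound.
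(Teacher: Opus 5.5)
Your proposal is correct and follows essentially the same route as the paper, which writes $\ell$ as a translation followed by truncation at $-1$ and asserts $|\max\{x-\hgamma_t,-1\}-\max\{y-\hgamma_t,-1\}|\le|x-y|$ in one line. Your identity-plus-reverse-triangle argument, and your case split, simply spell out the $1$-Lipschitz property of $u\mapsto\max\{u,-1\}$ that the paper leaves implicit.
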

\begin{proof}
For any $x, y \in \R$,
\[
|\ell(x) - \ell(y)| = |\max\{x-\hgamma_t, -1\} - \max\{y-\hgamma_t, -1\}|
\le
|(x-\hgamma_t) - (y-\hgamma_t)|
= |x-y|.
\qedhere\]
\end{proof}

By Kantorovich-Rubinstein duality, the Wasserstein-1 distance between distributions $P$ and $Q$ is given by
\[
W_1(P, Q) = \sup_{f \in \text{Lip}(1)} \abs[\Big]{ \E_{X\sim P}[f(X)] - \E_{Y\sim Q}[f(Y)] }
.\]
Since $\ell \in \text{Lip}(1)$,
we immediately have that
\begin{equation}
\left| \text{Term (I)} \right|
=
\left| \bE_\Hzero[\ell(\Vr_t)] - \bE_{\N(\bmu_t, \bsigma^2_t)}[\ell(X_t)] \right|
\le
W_1\left(P_{\Hzero}{(\Vr_t \in \cdot\mid \F_{t-1})}, \, \N(\bmu_t, \bsigma^2_t)\right).
\end{equation}

Conditioning on \( \F_{t-1}\), let 
\[
    Z_t
    \defeq
    \frac1n\sum_{i=1}^n h^{(t)}(x_i,Y) .
\]
By Berry-Esseen-type Wasserstein bounds, such as Corollary 4.2 of \citet{stein},
\[
W_1\left(
    P_{\Hzero}(\Vr_t\in\cdot\mid \F_{t-1}), \,
    \N(\bmu_t,\bsigma_t^2)
\right)
\le
\frac{C_1\rho_t}{b\bigl(S_n(\Xtr_{t-1})+\varepsilon\bigr)}
\le
\frac{C_1\rho_t}{b\varepsilon},
\]
where the Berry–Esseen moment ratio is defined as
\[
    \rho_t
    \defeq
    \frac{
        \bE_{\Hzero}\!\left[
            |Z_t-\bE_{\Hzero}[Z_t]|^3
        \right]
    }{
        \bVar_{\Hzero}(Z_t)
    },
\]
and \(C_1>0\) is a universal constant.
Since we assume that the conditional third central moment of \(Z\) is uniformly bounded and that its conditional variance is uniformly
nondegenerate, the Berry--Esseen moment ratio $\rho_t$ is uniformly bounded.  We therefore assume that there exists
\(\rho<\infty\) such that \(\rho_t\le \rho\) for all \(t\le T\).
Thus, 
\begin{equation}
\left| \text{Term (I)} \right|
\le
\frac{C_1\rho}{b\, \varepsilon}.
\end{equation}

\paragraph{Term II: Calibration mismatch.}
The second term is the drift obtained when the implemented shift \(\hgamma_t\)
is evaluated under the Gaussian approximation to the conditional null law of
\(\Vr_t\).  This term is zero or negative if \(\hgamma_t\) is at least as large
as the ideal calibrating shift for this Gaussian law.  Therefore, bounding this
term reduces to controlling the difference between the implemented shift
\(\hgamma_t\), selected using \(N(0,\hsigma_t^2)\), and the Gaussian-calibrating
shift associated with \(N(\bmu_t,\bsigma_t^2)\).

We defer the quantitative bound to Appendix~\ref{apx:shift}, where we introduce
the Gaussian payoff function
\[
    f(\gamma;\mu,\sigma)
    \defeq
    \E_{X\sim N(\mu,\sigma^2)}
    \bigl[\max\{X-\gamma,-1\}\bigr].
\]
Denote \(\gamma(\bmu_t,\bsigma_t)\) the Gaussian calibrating shift for
\(N(\bmu_t,\bsigma_t^2)\).
With this notation,
\begin{align*}
    \textnormal{Term (II)}
    &=
    \bE_{X_t\sim N(\bmu_t,\bsigma_t^2)}\bigl[\max\{X_t-\hgamma_t,-1\}\bigr] \\
    &=
    f(\hgamma_t;\bmu_t,\bsigma_t) 
    - \underbrace{f(\gamma(\bmu_t,\bsigma_t);\bmu_t,\bsigma_t)}_{=0}.
\end{align*}
This term is controlled by the discrepancy between
\(\hgamma_t\) and \(\gamma(\bmu_t,\bsigma_t)\).  In \cref{apx:shift}, we bound
the sensitivity of the calibrating map \((\mu,\sigma)\mapsto\gamma(\mu,\sigma)\).
Combining this sensitivity bound with the mean and variance mismatch bounds
from \cref{apx:mean-var} gives the desired bound on Term~(II).

\subsection{Sensitivity of the Null-Calibrating Shift}
\label{apx:shift}
We will use $\N(\mu, \sigma^2)$ to denote a Gaussian distribution with mean $\mu$ and variance $\sigma^2$,
$\Phi(\cdot; \mu, \sigma^2)$ for its CDF,
and $\phi(\cdot; \mu, \sigma^2)$ for its PDF.
$\Phi(\cdot)$ and $\phi(\cdot)$ refer to the case where $\mu = 0$, $\sigma^2 = 1$.

Define the functional
$f : \R \times \R \times \R_{>0} \to \R$
representing the expected truncated payoff under a Gaussian law by
\begin{equation}
f(\gamma;\mu,\sigma)
\defeq
\E_{X \sim \N(\mu,\sigma^2)}
\bigl[\max\{X-\gamma,-1\}\bigr].
\label{eq:implict-func}
\end{equation}
The function $f$ is strictly decreasing in $\gamma$. Therefore, the ideal null-calibrating shift $\gamma(\mu,\sigma)$ for a random variable $X \sim \N(\mu,\sigma^2)$ is defined as the unique solution to $f(\gamma;\mu,\sigma)=0$.

\begin{lemma}[Implicit Sensitivity of the Shift]
\label{lem:ift_appendix}
The mapping $(\mu,\sigma)\mapsto\gamma(\mu,\sigma)$ is continuously differentiable
on $\R\times\R_{>0}$ and satisfies the following properties:
\begin{itemize}
\item \textbf{Mean sensitivity:}
\(
\frac{\partial \gamma}{\partial \mu} = 1.
\)

\item \textbf{Variance sensitivity:}
The variance derivative
\(
\frac{\partial \gamma}{\partial \sigma}
\)
is both positive
and monotonically increasing in $\sigma$.

\item \textbf{Lipschitz bound:}
For any $(\mu_1,\sigma_1)$ and $(\mu_2,\sigma_2)$,
\[
\gamma(\mu_1,\sigma_1)-\gamma(\mu_2,\sigma_2)
\le
(\mu_1-\mu_2)
+
\Lambda(\sigma_1)(\sigma_1-\sigma_2),
\]
where
\(
\Lambda(\sigma)\defeq \frac{\phi(\xi_\sigma)}{\Phi(-\xi_\sigma)}
\)
and $\xi_\sigma$ is the unique solution to
\(
\sigma\,[\phi(\xi)-\xi\Phi(-\xi)]=1.
\)
\end{itemize}
\end{lemma}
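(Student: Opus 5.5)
The plan is to use the closed form of $f$ from \cref{sec:method} to solve for $\gamma(\mu,\sigma)$ almost explicitly. Write
\[
G(\xi)\defeq \phi(\xi)-\xi\Phi(-\xi),
\]
so that $f(\gamma;\mu,\sigma)=\sigma G(\xi)-1$ with $\xi=(\gamma-\mu-1)/\sigma$.

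First I record the basic properties of $G$. We have $G'(\xi)=-\xi\phi(\xi)-\Phi(-\xi)+\xi\phi(\xi)=-\Phi(-\xi)<0$. Moreover $G(\xi)>0$, because $G(\xi)=\E[(N-\xi)_+]$ for $N\sim\N(0,1)$. Finally $G(\xi)\to\infty$ as $\xi\to-\infty$ and $G(\xi)\to0$ as $\xi\to\infty$. Hence for each $\sigma>0$ the equation $\sigma G(\xi)=1$ has a unique root $\xi_\sigma$. The calibrating shift is therefore
\[
\gamma(\mu,\sigma)=\mu+1+\sigma\xi_\sigma .
\]
Since $\partial_\xi[\sigma G(\xi)]=-\sigma\Phi(-\xi)\neq 0$, the implicit function theorem shows that $\sigma\mapsto\xi_\sigma$ is $C^1$, and hence so is $\gamma$. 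This also gives the mean sensitivity $\partial\gamma/\partial\mu=1$ directly, since $f$ depends on $(\gamma,\mu)$ only through $\gamma-\mu$.

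Next I compute the variance derivative. Differentiating $\sigma G(\xi_\sigma)=1$ in $\sigma$ gives
\[
G(\xi_\sigma)-\sigma\Phi(-\xi_\sigma)\,\xi_\sigma'=0,
\qquad\text{so}\qquad
\xi_\sigma'=\frac{G(\xi_\sigma)}{\sigma\Phi(-\xi_\sigma)}>0 .
\]
Substituting into $\partial\gamma/\partial\sigma=\xi_\sigma+\sigma\xi_\sigma'$ gives
\[
\frac{\partial\gamma}{\partial\sigma}
=\xi_\sigma+\frac{\phi(\xi_\sigma)-\xi_\sigma\Phi(-\xi_\sigma)}{\Phi(-\xi_\sigma)}
=\frac{\phi(\xi_\sigma)}{\Phi(-\xi_\sigma)}
=\Lambda(\sigma),
\]
which is positive.

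For monotonicity, note that $\Lambda(\sigma)$ is the standard normal hazard rate $\phi(\xi)/\Phi(-\xi)$ evaluated at $\xi_\sigma$. Since $\xi_\sigma$ is increasing in $\sigma$, it suffices to show the hazard rate is increasing in $\xi$. Its derivative is
\[
\frac{\phi(\xi)}{\Phi(-\xi)}\Bigl(\frac{\phi(\xi)}{\Phi(-\xi)}-\xi\Bigr),
\]
which is positive exactly when $\phi(\xi)-\xi\Phi(-\xi)>0$. That is the statement $G>0$ already shown, so the monotonicity reduces to a fact we have in hand.

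Finally, the Lipschitz-type bound follows from convexity. Because $\partial_\sigma\gamma$ is nondecreasing, $\gamma(\mu,\cdot)$ is convex. The tangent-line inequality at $\sigma_1$ therefore gives
\[
\gamma(\mu_2,\sigma_2)\ge\gamma(\mu_2,\sigma_1)+\Lambda(\sigma_1)(\sigma_2-\sigma_1).
\]
Combining this with the exact identity $\gamma(\mu_1,\sigma_1)-\gamma(\mu_2,\sigma_1)=\mu_1-\mu_2$ yields the claimed inequality
\[
\gamma(\mu_1,\sigma_1)-\gamma(\mu_2,\sigma_2)\le(\mu_1-\mu_2)+\Lambda(\sigma_1)(\sigma_1-\sigma_2).
\]
This is the form \cref{lem:sensitivity_main} needs.

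The main obstacle I expect is the monotonicity of $\Lambda$. Everything else is routine implicit differentiation, and the monotonicity hinges on recognizing that the sign of the hazard-rate derivative is governed by the same positive quantity $G$ that defines $\xi_\sigma$.
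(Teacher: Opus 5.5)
Your proof is correct and follows essentially the same route as the paper. The shared steps are the closed form of $f$, implicit differentiation giving $\partial\gamma/\partial\sigma=\phi(\xi_\sigma)/\Phi(-\xi_\sigma)$, monotonicity of $\xi_\sigma$ from $G'(\xi)=-\Phi(-\xi)<0$, monotonicity of the normal hazard rate, and a first-order bound using the increasing derivative. Your tangent-line (convexity) step is a repackaging of the paper's mean-value-theorem case split, and you also justify two points the paper only asserts: existence of the root, via the limits of $G$, and the increase of $\phi(\xi)/\Phi(-\xi)$, via $G>0$.
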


\begin{proof}
Since the expectation involves truncation at $-1$, the function $f$ admits the 
representation
\begin{align*}
f(\gamma;\mu,\sigma)
&=
\int_{-\infty}^{\gamma-1} (-1)\,\phi(v;\mu,\sigma)\,dv
+
\int_{\gamma-1}^{\infty} (v-\gamma)\,\phi(v;\mu,\sigma)\,dv
\\
&=
\sigma\,
\phi\!\left(\frac{\gamma-\mu-1}{\sigma}\right)
-
(\gamma-\mu-1)\,
\Phi\!\left(-\frac{\gamma-\mu-1}{\sigma}\right)
-1.
\end{align*}

The function $f$ is continuously differentiable and strictly decreasing in
$\gamma$.
Its partial derivatives are
\begin{equation*}
\frac{\partial f}{\partial \gamma}
=
-\Phi\!\left(-\frac{\gamma-\mu-1}{\sigma}\right),
\quad
\frac{\partial f}{\partial \mu}
=
\Phi\!\left(-\frac{\gamma-\mu-1}{\sigma}\right),
\quad
\frac{\partial f}{\partial \sigma}
=
\phi\!\left(\frac{\gamma-\mu-1}{\sigma}\right).
\end{equation*}
Let $\xi\defeq(\gamma(\mu,\sigma)-\mu-1)/\sigma$.
Since $\partial f/\partial\gamma \in (-1, 0)$ for all $\mu$ and all $\sigma>0$, the implicit function
theorem guarantees that $\gamma(\mu,\sigma)$ is continuously differentiable, with
\begin{equation}
\frac{\partial \gamma}{\partial \mu}
=
-\frac{\partial f/\partial\mu}{\partial f/\partial\gamma}
=1,
\qquad
\frac{\partial \gamma}{\partial \sigma}
=
-\frac{\partial f/\partial\sigma}{\partial f/\partial\gamma}
=
\frac{\phi(\xi)}{\Phi(-\xi)}.
\end{equation}

From the defining equation $f(\gamma;\mu,\sigma)=0$, we obtain
\[
1
=
\sigma\,[\phi(\xi)-\xi\Phi(-\xi)]
\quad\Longleftrightarrow\quad
\frac{1}{\sigma}=r(\xi),
\]
where $r(\xi)\defeq\phi(\xi)-\xi\Phi(-\xi)$.
Since $r'(\xi)=-\Phi(-\xi)<0$, the function $r$ is strictly decreasing in $\xi$.
Hence the solution $\xi_\sigma$ to $1/\sigma=r(\xi)$ is unique and is strictly increasing in $\sigma$.

Moreover, the ratio
${\phi(\xi)}/{\Phi(-\xi)}$
is strictly increasing in $\xi$. Since
${\partial \gamma}/{\partial \sigma}
=
{\phi(\xi)}/{\Phi(-\xi)}$,
it follows that $\partial \gamma/\partial \sigma$ is strictly increasing in $\xi$.

Let $\xi_\sigma$ denote the unique solution to $1/\sigma=r(\xi)$, so that $\xi$ is implicitly determined by $\sigma$. Define
\[
\Lambda(\sigma)\defeq \frac{\partial \gamma}{\partial \sigma}
=
\frac{\phi(\xi_\sigma)}{\Phi(-\xi_\sigma)}.
\]

Then, since $\xi_\sigma$ is strictly increasing in $\sigma$ and $\phi(\xi)/\Phi(-\xi)$ is strictly increasing in $\xi$, it follows that $\Lambda(\sigma)$ is strictly increasing in $\sigma$.

By the mean value theorem, there exists
$(\tilde{\mu},\tilde{\sigma})$ on the line segment connecting
$(\mu_1,\sigma_1)$ and $(\mu_2,\sigma_2)$ such that
\[
\gamma(\mu_1,\sigma_1)-\gamma(\mu_2,\sigma_2)
=
(\mu_1-\mu_2)
+
\Lambda(\tilde{\sigma})(\sigma_1-\sigma_2).
\]
If $\sigma_1\ge\sigma_2$, then $\tilde{\sigma}\le\sigma_1$ and hence
$\Lambda(\tilde{\sigma})\le\Lambda(\sigma_1)$.
If $\sigma_1<\sigma_2$, then $\sigma_1-\sigma_2<0$ and
$\Lambda(\tilde{\sigma})>\Lambda(\sigma_1)$, which again yields
\(
\Lambda(\tilde{\sigma})(\sigma_1-\sigma_2)
\le
\Lambda(\sigma_1)(\sigma_1-\sigma_2).
\)
In both cases, the claimed bound follows.
\end{proof}

We have shown that $\frac{\partial f}{\partial \gamma} \in (-1,0)$
in the proof of \cref{lem:ift_appendix}.
Let $\gamma(\bmu_t, \bsigma_t)$ be the unique solution to $f(\gamma;\bmu_t,\bsigma_t)=0$.
When
$\gamma(\bmu_t, \bsigma_t)\ge \hgamma_t$, 
then
\[
f(\hgamma_t;\bmu_t,\bsigma_t) - f(\gamma(\bmu_t, \bsigma_t);\bmu_t,\bsigma_t)
\le 
\gamma(\bmu_t, \bsigma_t) - \hgamma_t 
.\]
When
$\gamma(\bmu_t, \bsigma_t) < \hgamma_t$, 
then
\[
f(\hgamma_t;\bmu_t,\bsigma_t) - f(\gamma(\bmu_t, \bsigma_t);\bmu_t,\bsigma_t)
\le 
0.\]
Thus, 
\[
\text{Term II} \le \max\{\gamma(\bmu_t, \bsigma_t)- \hgamma_t, 0 \}
.\]

Recall that \(\hgamma_t\) is selected using the Gaussian law \(N(0,\hsigma_t^2)\).
Combining this with the Lipschitz bound from \cref{lem:ift_appendix} gives
\[
\text{Term II}
\le
\max\{\bmu_t + \Lambda(\bsigma_t)(\bsigma_t-\hsigma_t), 0\}.
\]
This bound controls the calibration error by the mean and variance mismatch
between the Gaussian approximation to the conditional null law of \(\Vr_t\) and the Gaussian law used to select the implemented shift \(\hgamma_t\).

\subsection{Bounding the Mean and Variance Mismatch}
\label{apx:mean-var}

In this section, we bound the mismatch between the Gaussian approximation
\(N(\bmu_t,\bsigma_t^2)\) to the conditional null law of \(\Vr_t\) and the
centered Gaussian law \(N(0,\hsigma_t^2)\) used to select the implemented shift
\(\hgamma_t\).  Here \(\bmu_t\) and \(\bsigma_t^2\) denote the scalar mean and
variance parameters of the Gaussian approximation to \(\Vr_t\mid\F_{t-1}\).
They are distinct from the conditional mean embeddings \(\muac\) and \(\mubc\),
which are Hilbert-space valued functions used to construct the residualized
kernel \(h^{(t)}\).

We control this mismatch through two quantities:
\[
    \abs{\bmu_t}
    \qquad\text{and}\qquad
    \abs{\bsigma_t^2-\hsigma_t^2}.
\]
The first term captures the nonzero mean induced by conditional mean embedding
estimation error, while the second captures both conditional mean embedding
error and finite-batch variation in the variance estimate.

Define the regression errors
\[
    \detact(c) \defeq \muact(c)-\muac(c),
    \qquad
    \detbct(c) \defeq \mubct(c)-\mubc(c).
\]

\begin{lemma}[Bound on the Mean Mismatch]
\label{lem:mean_mismatch}
Assume the kernel is bounded such that $\sup_{x} {h^{(t)}(x, x)} \le \kappa$,
and \(\sup_c\norm{\phi_C(c)}\le 1\).  
Then
\[
    \abs{\bmu_t}
    \le
    \frac{\sqrt{\kappa}}{\varepsilon}
    \norm{\detact}\norm{\detbct}.
\]
\end{lemma}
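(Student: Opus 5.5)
Since \(\bmu_t = m_t = \bE_{\Hzero}[g^{(t)}(Y)]\) and the normalization \(S_n(\Xtr_{t-1})+\varepsilon\) is \(\F_{t-1}\)-measurable and at least \(\varepsilon\), the plan is to pull the denominator out and reduce everything to bounding the inner product
\[
    \abs{\bmu_t}
    \le
    \frac{1}{\varepsilon}\,
    \abs[\Big]{\frac1n\sum_{i=1}^n \bE_{\Hzero}\bigl[h^{(t)}(x_i,Y)\bigr]}
    =
    \frac{1}{\varepsilon}\,
    \abs[\Big]{\Bigl\langle \frac1n\sum_{i=1}^n \hat\psi_t(x_i),\; \bE_{\Hzero}\bigl[\hat\psi_t(Y)\bigr]\Bigr\rangle},
\]
where \(\hat\psi_t\) is the KCI feature built with the estimated embeddings \(\muact,\mubct\). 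Here we use \(h^{(t)}(z,z')=\langle\hat\psi_t(z),\hat\psi_t(z')\rangle\) and exchange the Bochner expectation with the inner product; this is legitimate since \(h^{(t)}\) is bounded and the RKHSs are separable. Cauchy--Schwarz together with \(\norm{\hat\psi_t(x_i)}=\sqrt{h^{(t)}(x_i,x_i)}\le\sqrt\kappa\) then bounds the first factor by \(\sqrt\kappa\). It remains to show
\[
    \norm{\bE_{\Hzero}[\hat\psi_t(Y)]}\le\norm{\detact}\norm{\detbct}.
\]

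For this we expand the residuals. Write
\[
    \phia(a)-\muact(c)=\bigl(\phia(a)-\muac(c)\bigr)-\detact(c),
\]
and similarly for \(B\). The tensor product then splits into four terms, each tensored with \(\phic(c)\). We take the expectation under \(\Hzero\) by first conditioning on \(C\), and handle the terms one at a time.
\begin{itemize}
\item The true-residual \(\otimes\) true-residual term vanishes, because under \(\Hzero\) the conditional cross-covariance is zero. This is exactly \(\E_{\Hzero}[\psi(Z)]=0\) as stated in the paper.
\item The two cross terms vanish because \(\E[\phia(A)-\muac(C)\mid C]=0\) and \(\E[\phib(B)-\mubc(C)\mid C]=0\), while \(\detact(C)\) and \(\detbct(C)\) are \(\F_{t-1}\)-measurable functions of \(C\). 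The fresh \(Y\) is independent of \(\F_{t-1}\), which is what makes this conditioning valid.
\end{itemize}
Only the product-of-errors term survives:
\[
    \bE_{\Hzero}\bigl[\hat\psi_t(Y)\bigr]
    =
    \E_C\bigl[\detact(C)\otimes\detbct(C)\otimes\phic(C)\bigr].
\]

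Finally, the norm of this expectation is at most
\[
    \E_C\bigl[\norm{\detact(C)}\,\norm{\detbct(C)}\,\norm{\phic(C)}\bigr]
    \le
    \E_C\bigl[\norm{\detact(C)}\,\norm{\detbct(C)}\bigr],
\]
using \(\norm{\phic}\le1\). Cauchy--Schwarz in \(L^2(P_C)\) bounds this by \(\norm{\detact}_{L^2(P_C)}\norm{\detbct}_{L^2(P_C)}\). I interpret the norms in the statement as these \(L^2(P_C;\H)\) norms; a sup-norm reading would also work and gives a weaker statement.

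The main obstacle is the bookkeeping in the middle step. We must verify that the cross terms vanish exactly even though \(\hat\psi_t\) uses estimated embeddings, which relies on \(\detact,\detbct\) being fixed given \(\F_{t-1}\) and on \(Y\) being independent of the past. We must also justify interchanging conditional Bochner expectations with tensor products, which requires integrability; this follows from boundedness of the kernels.
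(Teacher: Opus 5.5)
Your proposal is correct and follows the paper's proof essentially step for step. You pull out $S_n(\Xtr_{t-1})+\varepsilon\ge\varepsilon$, which holds because $S_n=\norm{\frac1n\sum_i\hat\psi_t(x_i)}^2\ge 0$; you should state that explicitly. You then apply Cauchy--Schwarz with $\norm{\hat\psi_t(x_i)}\le\sqrt\kappa$, and expand the residuals so that only $\E_C[\detact(C)\otimes\detbct(C)\otimes\phic(C)]$ survives.

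The only difference is the last step. The paper uses $\norm{\detact(C)}\le\norm{\detact}\norm{\phic(C)}$, treating the regression errors as operators applied to $\phic$. Your Cauchy--Schwarz in $L^2(P_C)$ instead gives the bound in terms of the smaller $L^2(P_C;\H)$ norms. This implies the paper's version and does not need the errors to factor through $\phic$.
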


\begin{proof}
For simplicity, we write \(S_n(\Xtr_{t-1})\) as  \(S_n^{(t-1)}\).
Recall that
\begin{align}
\bmu_t
=
\bE_{\Hzero}[\Vr_t]
=
\frac{1}{S_n^{(t-1)}+\varepsilon}
\bE_{\Hzero}\left[
    \frac1n\sum_{i=1}^n h^{(t)}(x_i,Y)
\right].
\label{eq:scalar-mean-recall}
\end{align}
Here \(x_i=(a_i,b_i,c_i)\) are fixed training samples conditional on \(\F_{t-1}\), while
\(Y=(A,B,C)\) denotes a fresh null draw.

Using the feature representation
\[
    h^{(t)}(x_i,Y)
    =
    \bigl\langle
        \psi^{(t)}(x_i),\psi^{(t)}(Y)
    \bigr\rangle_{\H},
\]
where
\[
    \psi^{(t)}(x)
    \defeq
    \bigl(\phia(a)-\muact(c)\bigr)
    \otimes
    \bigl(\phib(b)-\mubct(c)\bigr)
    \otimes
    \phic(c),
\]
is in the space of Hilbert-Schmidt (HS) operators $\H=\HS(\H_\C, \HS(\H_\B, \H_\A))$.

Expanding
\[
    \phia(A)-\muact(C)
    =
    \bigl(\phia(A)-\muac(C)\bigr)-\detact(C),
\]
and similarly for \(B\), we note that \(\muact\), \(\mubct\), and hence
\(\detact\), \(\detbct\), are \(\F_{t-1}\)-measurable.

Under \(\Hzero\), \(A\ind B\mid C\).  Moreover,
\[
    \bE_{\Hzero}\bigl[\phia(A)-\muac(C)\mid C\bigr]=0,
    \qquad
    \bE_{\Hzero}\bigl[\phib(B)-\mubc(C)\mid C\bigr]=0.
\]
Therefore, after taking the conditional expectation given \(C\), all terms
containing at least one population residual vanish.  The only remaining term is
the product of the regression errors.  Hence
\[
    \bE_{\Hzero}\bigl[\psi^{(t)}(Y)\mid \F_{t-1}\bigr]
    =
    \bE_C
    \Big[
        \detact(C)\otimes \detbct(C)\otimes \phic(C)
    \Big].
\]

Consequently,
\begin{align}
\bmu_t
&=
\frac{1}{S_n^{(t-1)}+\varepsilon}
\left\langle
    \frac1n\sum_{i=1}^n \psi^{(t)}(x_i),
    \bE_C
    \Big[
        \detact(C)\otimes \detbct(C)\otimes \phic(C)
    \Big]
\right\rangle_{\H}.
\label{eq:mean-error-inner-product}
\end{align}

Applying Cauchy--Schwarz and the triangle inequality,
\begin{align*}
|\bmu_t|
&\le
\frac{1}{S_n^{(t-1)}+\varepsilon}
\left\|
    \frac1n\sum_{i=1}^n \psi^{(t)} (x_i)
\right\|_{\H}
\left\|
    \bE_C
    \big[
        \detact(C)\otimes \detbct(C)\otimes \phic(C)
    \big]
\right\|_{\H}
\\
&\le
\frac{1}{S_n^{(t-1)}+\varepsilon}
\left(
    \frac1n\sum_{i=1}^n \norm{\psi^{(t)} (x_i)}_{\H}
\right)
\bE_C
\big[
    \norm{\detact(C)}
    \norm{\detbct(C)}
    \norm{\phic(C)}
\big].
\end{align*}
Since
\[
    \norm{\psi^{(t)} (x_i)}_{\H}
    =
    \sqrt{h^{(t)}(x_i,x_i)}
    \le \sqrt{\kappa},
\]
and \(\sup_c\norm{\phic(c)}\le 1\), we have
\[
\bE_C
\big[
    \norm{\detact(C)}
    \norm{\detbct(C)}
    \norm{\phic(C)}
\big]
\le
\norm{\detact}\norm{\detbct}.
\]
Here we used
\[
    \norm{\detact(C)}
    \le
    \norm{\detact}\norm{\phic(C)}
    \le
    \norm{\detact},
\]
and similarly for \(\detbct\). Therefore,
\[
    |\bmu_t|
    \le
    \frac{\sqrt{\kappa}}{S_n^{(t-1)}+\varepsilon}
    \norm{\detact}\norm{\detbct}.
\]
Since \(S_n^{(t-1)}\) is the V-statistic estimator of a squared RKHS norm,
we have \(S_n^{(t-1)}\ge 0\).  Hence
\(
    S_n^{(t-1)}+\varepsilon \ge \varepsilon .
\)
It follows that
\[
    |\bmu_t|
    \le
    \frac{\sqrt{\kappa}}{\varepsilon}
    \norm{\detact}\norm{\detbct}.
\]
\end{proof}

\begin{lemma}[Bound on the Variance Mismatch]
\label{lem:var_mismatch}
Assume the kernel is bounded such that $\sup_{x} {h^{(t)}(x, x)} \le \kappa$.  
The difference between the $\bsigma_t^2$ and $\hsigma_t^2$ is bounded by
\begin{equation}
    \abs{\bsigma_t^2 - \hsigma_t^2}
    \le
    \frac{2\kappa^2}{b\varepsilon^2}.
\end{equation}
\end{lemma}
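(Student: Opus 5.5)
We will bound each of \(\bsigma_t^2\) and \(\hsigma_t^2\) separately by \(\kappa^2/(b\varepsilon^2)\). Both are nonnegative, so \(|\bsigma_t^2-\hsigma_t^2|\le\max\{\bsigma_t^2,\hsigma_t^2\}\le \bsigma_t^2+\hsigma_t^2\), and the claimed factor of \(2\) follows. Everything is conditional on \(\F_{t-1}\), so \(h^{(t)}\), the training points \(\{x_i\}_{i=1}^n\) and the denominator \(S_n(\Xtr_{t-1})+\varepsilon\) are fixed. The validation points \(v_j\) are also \(\F_{t-1}\)-measurable, since \(\Xval_{t-1}\subseteq\H_{t-1}\). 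No concentration argument is needed; we only use deterministic bounds on \(g^{(t)}\).

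\emph{Step 1: pointwise bound on \(g^{(t)}\).} For any \(y\), Cauchy--Schwarz in the feature space \(\H\) gives
\[
|h^{(t)}(x_i,y)| = |\langle\psi^{(t)}(x_i),\psi^{(t)}(y)\rangle_\H| \le \sqrt{h^{(t)}(x_i,x_i)}\sqrt{h^{(t)}(y,y)}\le\kappa .
\]
As in the proof of \cref{lem:mean_mismatch}, \(S_n(\Xtr_{t-1})=\|\frac1n\sum_i\psi^{(t)}(x_i)\|_\H^2\ge 0\), so \(S_n(\Xtr_{t-1})+\varepsilon\ge\varepsilon\). Averaging over \(i\) therefore yields \(|g^{(t)}(y)|\le \kappa/\varepsilon\) uniformly in \(y\). (A sharper bound is available by keeping \(\|\frac1n\sum_i\psi^{(t)}(x_i)\|\sqrt\kappa/(S_n+\varepsilon)\), but the crude \(\kappa/\varepsilon\) suffices for the stated form.)

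\emph{Step 2: bound \(\bsigma_t^2\).} By definition, \(\bsigma_t^2=s_t^2/b\) with \(s_t^2=\bVar_{\Hzero}(g^{(t)}(Y))\le \bE_{\Hzero}[(g^{(t)}(Y))^2]\le \kappa^2/\varepsilon^2\). Hence \(0\le \bsigma_t^2\le \kappa^2/(b\varepsilon^2)\).

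\emph{Step 3: bound \(\hsigma_t^2\).} From \eqref{eq:sigma-estimate}, \(\hsigma_t^2=\frac1{b^2}\sum_{j=1}^b (g^{(t)}(v_j))^2\le \frac{1}{b^2}\cdot b\cdot\frac{\kappa^2}{\varepsilon^2}=\frac{\kappa^2}{b\varepsilon^2}\).

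\emph{Step 4: combine.} The triangle inequality, together with nonnegativity of both quantities, gives \(|\bsigma_t^2-\hsigma_t^2|\le 2\kappa^2/(b\varepsilon^2)\).

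The main obstacle is Step 1, specifically the denominator: it requires confirming that \(S_n\ge 0\), which uses the RKHS/feature-map structure of \(h^{(t)}\) rather than mere symmetry. It also requires noting that the bound \(\sup_x h^{(t)}(x,x)\le\kappa\) applies to arbitrary, possibly off-support, validation and test points, because the estimated embeddings \(\muact,\mubct\) enter \(\psi^{(t)}\). We take the hypothesis as stated to cover all \(x\). The resulting bound is crude and does not show the difference is small relative to \(\bsigma_t^2\). It is, however, all that Theorem~\ref{lem:drift_bound_main} needs, where it is converted from a bound on variances to one on standard deviations via \(|\bsigma_t-\hsigma_t|\le|\bsigma_t^2-\hsigma_t^2|/(\bsigma_t+\hsigma_t)\) or a similar device.
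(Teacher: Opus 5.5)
Your proposal is correct and follows essentially the same route as the paper. Both arguments get a uniform bound $|g^{(t)}|\le\kappa/\varepsilon$ from Cauchy--Schwarz in the feature space and $S_n+\varepsilon\ge\varepsilon$, bound $\bsigma_t^2$ and $\hsigma_t^2$ separately by $\kappa^2/(b\varepsilon^2)$, and combine them with the triangle inequality. Your remark that $|\bsigma_t^2-\hsigma_t^2|\le\max\{\bsigma_t^2,\hsigma_t^2\}$ for nonnegative quantities would in fact remove the factor of $2$, which the paper does not exploit.
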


\begin{proof}

  Recall that the raw betting statistic can be written as
\[
    \Vr_t
    =
    \frac1b\sum_{j=1}^b g^{(t)}(Y_j).
\]
Under the null and conditional on \(\F_{t-1}\), the variables
\(\{g^{(t)}(Y_j)\}_{j=1}^b\) are i.i.d.  Hence the conditional variance of
\(\Vr_t\) is
\[
    \bsigma_t^2
    =
    \frac1b
    \bVar_{\Hzero}\bigl(g^{(t)}(Y)\bigr)
    =
    \frac1b
    \bE_{\Hzero}\bigl[(g^{(t)}(Y))^2\bigr]
    -
    \frac1b \bmu_t^2,
\]
where \(Y\) is a fresh null draw and
\[
    \bmu_t
    =
    \bE_{\Hzero}\bigl[g^{(t)}(Y)\bigr].
\]
The empirical variance-scale estimator used for calibration is
\[
    \hsigma_t^2
    =
    \frac1b
    \left(
        \frac1b\sum_{j=1}^b (g^{(t)}(v_j))^2
    \right)
    =
    \frac1{b^2}\sum_{j=1}^b (g^{(t)}(v_j))^2,
\]
where \(\{v_j\}_{j=1}^b\) are validation samples independent of the training
data used to construct \(g^{(t)}\).

Therefore,
\begin{align*}
\abs{\bsigma_t^2-\hsigma_t^2}
&\le \abs{\bsigma_t^2}+\abs{\hsigma_t^2}\\
&\le \frac1b \left\{\bE_{\Hzero}\bigl[(g^{(t)}(Y))^2\bigr] + \frac1b\sum_{j=1}^b (g^{(t)}(v_j))^2\right\}.
\end{align*}
Moreover, by the boundedness assumption,
\(
\norm{\psi^{(t)} (x_i)}_{\H}
    =
    \sqrt{h^{(t)}(x_i,x_i)}
    \le \sqrt{\kappa}
\).
Hence
\begin{align*}
\abs{g^{(t)}(Y)}
&=
\left|
\frac{1}{n\bigl(S_n(\Xtr_{t-1})+\varepsilon\bigr)}
\sum_{i=1}^n h^{(t)}(x_i,Y)
\right| \\
&\le
\frac{1}{n\varepsilon}
\sum_{i=1}^n
\abs{h^{(t)}(x_i,Y)} \\
&=
\frac{1}{n\varepsilon}
\sum_{i=1}^n
\Big\lvert
\left\langle
    \psi^{(t)}(x_i),\psi^{(t)}(Y)
\right\rangle_{\H}
\Big\rvert \\
&\le
\frac{1}{n\varepsilon}
\sum_{i=1}^n
\norm{\psi^{(t)}(x_i)}_{\H}
\norm{\psi^{(t)}(Y)}_{\H} \\
&\le
\frac{\kappa}{\varepsilon}.
\end{align*}
Similarly, \(\abs{g^{(t)}(v_j)}\le \frac{\kappa}{\varepsilon}\) for each \(j\in \{0,\dots, b\}\).
Therefore,
\[
    \abs{\bsigma_t^2-\hsigma_t^2}
    \le
    \frac{2\kappa^2}{b\varepsilon^2}.
\]
This completes the proof.
\end{proof}

We now return to Term~(II). By the shift sensitivity bound,
\[
\textnormal{Term (II)}
\le
\max\{
    \bmu_t+\Lambda(\bsigma_t)(\bsigma_t-\hsigma_t),
    0
\}.
\]
Hence
\begin{align*}
\textnormal{Term (II)}
&\le \abs{\bmu_t+\Lambda(\bsigma_t)(\bsigma_t-\hsigma_t)}\\
&\le
\abs{\bmu_t}
+
\Lambda(\bsigma_t)\abs{\bsigma_t-\hsigma_t} \\
&=
\abs{\bmu_t}
+
\Lambda(\bsigma_t)
\frac{\abs{\bsigma_t^2-\hsigma_t^2}}{\bsigma_t+\hsigma_t} \\
&\le
\abs{\bmu_t}
+
\frac{\Lambda(\bsigma_t)}{\bsigma_t}
\abs{\bsigma_t^2-\hsigma_t^2}.
\end{align*}

Recall that
\(
    \Lambda(\bsigma_t)
    =
    \frac{\phi(\xi_{\bsigma_t})}{\Phi(-\xi_{\bsigma_t})},
\)
where \(\xi_{\bsigma_t}\) solves
\(
    \phi(\xi)-\xi\Phi(-\xi)=\frac{1}{\bsigma_t}.
\)
Thus
\[
    \frac{\Lambda(\bsigma_t)}{\bsigma_t}
    =
    \frac{\phi(\xi_{\bsigma_t})^2}{\Phi(-\xi_{\bsigma_t})}
    -
    \xi_{\bsigma_t}\phi(\xi_{\bsigma_t}),
\]
which is uniformly bounded above by a finite constant \(C_2\). Combining this
with the mean and variance mismatch bounds gives
\begin{align*}
\textnormal{Term (II)}
&\le
\abs{\bmu_t}
+
C_2\abs{\bsigma_t^2-\hsigma_t^2} \\
&\le
\frac{\sqrt{\kappa}}{\varepsilon}
\norm{\detact}\norm{\detbct}
+
\frac{2C_2\kappa^2}{b\varepsilon^2}.
\end{align*}

\subsection{Finite-Sample Type~I Error Bound}
\label{apx:type1_error}

In this section, we translate the one-step drift bound into a finite-sample
bound on the Type~I error under \(\Hzero\).  Combining the Gaussian
approximation bound for Term~(I) with the calibration mismatch bound for
Term~(II), we obtain
\begin{equation}
\label{eq:total_drift_bound}
\delta_t
=
\textnormal{Term (I)}+\textnormal{Term (II)}
\le
\frac{C_1\rho}{b \,\varepsilon}
+
\frac{\sqrt{\kappa}}{\varepsilon}
\norm{\detact}\norm{\detbct}
+
\frac{2C_2 \kappa^2}{b\, \varepsilon^2}.
\end{equation}
Let \(U_t\) denote the right-hand side of
\cref{eq:total_drift_bound}.

\begin{theorem}[Finite-Sample Type~I Error]
\label{thm:finite_sample_type1}
Let \(\alpha\in(0,1)\). Suppose that under \(\Hzero\), for every
\(t\le T\),
\[
    \delta_t
    =
    \E_{\Hzero}[V_t\mid \F_{t-1}]
    \le
    U_t,
\]
where
\[
    U_t
    \defeq
    \frac{C_1\rho}{b\,\varepsilon}
    +
    \frac{\sqrt{\kappa}}{\varepsilon}
    \norm{\detact}\norm{\detbct}
    +
    \frac{2C_2\kappa^2}{b\,\varepsilon^2}.
\]
Then
\[
\Pr_{\Hzero}\!\left(
    \exists t\le T:
    W_t\ge \frac1\alpha
\right)
\le
\alpha
\exp\left(
    \sum_{t=1}^T \lambda_t U_t
\right).
\]
\end{theorem}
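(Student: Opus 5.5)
The plan is to build a compensated wealth process, show it is a nonnegative supermartingale under \(\Hzero\), and then apply Ville's inequality. Set
\[
    \widetilde W_t \defeq \frac{W_t}{\prod_{i=1}^t (1+\lambda_i U_i)},
    \qquad \widetilde W_0 = 1 .
\]
Nonnegativity is immediate from three facts:
\begin{itemize}
\item \(V_t=\max\{\Vr_t-\hgamma_t,-1\}\ge -1\);
\item \(\lambda_t\in(0,1)\), so \(1+\lambda_t V_t\ge 0\) and hence \(W_t\ge 0\);
\item \(U_t\ge 0\), since every summand of \(U_t\) is nonnegative, so each denominator is at least \(1\).
\end{itemize}
One point needs care: \(U_t\) must be \(\F_{t-1}\)-measurable. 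It depends on \(t\) only through \(\norm{\detact}\norm{\detbct}\). The estimated embeddings are fit on \(\Xtr_{t-1}\), so these regression errors are predictable, and \(\lambda_t\) is predictable by construction.

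\textbf{Step 1: supermartingale property.} Using the predictability of \(\lambda_t\) and \(U_t\), and the hypothesis \(\E_{\Hzero}[V_t\mid\F_{t-1}]=\delta_t\le U_t\), we get
\[
    \E_{\Hzero}[\widetilde W_t\mid\F_{t-1}]
    = \widetilde W_{t-1}\,\frac{1+\lambda_t\delta_t}{1+\lambda_t U_t}
    \le \widetilde W_{t-1}.
\]
The conditional expectation of \(V_t\) is well defined because \(V_t\) is bounded. Under the stated kernel bounds, \(\abs{\Vr_t}\le\kappa/\varepsilon\), and \(\hgamma_t\) is finite. So \((\widetilde W_t)_{t\ge0}\) is a nonnegative supermartingale with \(\widetilde W_0=1\).

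\textbf{Step 2: Ville's inequality.} Ville gives \(\Pr_{\Hzero}(\exists t:\widetilde W_t\ge 1/\alpha)\le\alpha\). This is exactly the anytime validity of the compensated rejection rule in \cref{thm:finite_sample_type1_main}.

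\textbf{Step 3: bound for the original test.} Fix a horizon \(T\) and set \(K_T\defeq\prod_{i=1}^T(1+\lambda_iU_i)\). Because \(U_i\ge0\), this product is nondecreasing in \(t\), so for every \(t\le T\),
\[
    W_t\ge\frac1\alpha
    \;\Longrightarrow\;
    \widetilde W_t \ge \frac{1}{\alpha K_T}.
\]

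\textbf{Step 4: handling a random \(K_T\).} If \(K_T\) were deterministic, Ville applied at level \(\alpha K_T\) would finish the proof. In general \(U_t\) is only predictable, not deterministic, and this is the main obstacle.

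\emph{First route.} Treat \(U_t\) as a deterministic upper bound, as the theorem's phrasing ``deterministic one-step drift bound'' suggests. If the regression error norms are random, replace them by deterministic bounds holding for all \(t\le T\); the drift hypothesis then holds with a deterministic \(U_t\) and the argument applies directly.

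\emph{Fallback.} Apply Ville to the stopped process \(\widetilde W_{t\wedge T}\) together with optional stopping at \(\tau=\inf\{t\le T: W_t\ge 1/\alpha\}\). This gives \(\E[W_\tau/K_\tau]\le 1\). On the event \(\{\tau\le T\}\) we have \(W_\tau\ge 1/\alpha\), which yields \(\E[\mathbf 1\{\tau\le T\}/K_\tau]\le\alpha\). This recovers the bound only when \(K_\tau\) is bounded by a constant, so a deterministic bound is still needed.

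\textbf{Step 5: final bound.} With \(K_T\) deterministic, use \(1+x\le e^x\) to get
\[
    K_T\le\exp\Bigl(\sum_{t=1}^T\lambda_tU_t\Bigr).
\]
Substituting into Step 3 and applying Ville gives the stated bound \(\alpha\exp\bigl(\sum_{t=1}^T\lambda_tU_t\bigr)\).
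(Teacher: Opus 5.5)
\textbf{Overall.} Your Steps 1--3 and 5 are the paper's argument. The paper defines the same compensated process $\widetilde W_t$ and derives $\E_{\Hzero}[\widetilde W_t\mid\F_{t-1}]\le\widetilde W_{t-1}$ from $\delta_t\le U_t$. It then bounds $\prod_{i\le t}(1+\lambda_iU_i)\le\exp(\sum_{i\le T}\lambda_iU_i)$ and applies Ville's inequality at the threshold $\exp(-\sum_{i\le T}\lambda_iU_i)/\alpha$. The paper does this without comment, so it implicitly treats $\sum_{i\le T}\lambda_iU_i$ as a constant. You are right that this is the one delicate point and that Ville does not apply directly with a random threshold. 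By contrast, the compensated rule $\widetilde W_t\ge 1/\alpha$ in your Step 2 is valid even with random predictable $\lambda_t,U_t$.

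\textbf{The gap in Step 4.} Your repair de-randomizes only part of $K_T$. The betting fractions $\lambda_t=\bm{\sigma}(\eta_t)$ are themselves fitted to past data in \cref{alg:skb}. So even after the regression error norms are replaced by deterministic bounds, $K_T=\prod_{i\le T}(1+\lambda_iU_i)$ is still random, and ``with $K_T$ deterministic'' in Step 5 is not justified. Any of the following closes it:
\begin{itemize}
\item Assume the $\lambda_t$ are deterministic as well. Your argument then gives exactly the stated bound.
\item Use $\lambda_t<1$ together with an almost-sure deterministic bound $\bar U_t\ge U_t$. This gives $K_T\le\exp(\sum_{t\le T}\bar U_t)$ and the weaker bound $\alpha\exp(\sum_{t\le T}\bar U_t)$.
\item Use your own fallback without any de-randomization. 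Since $K_\tau\le K_T\le\exp(\sum_{t\le T}\lambda_tU_t)$ pointwise, it already proves $\Pr_{\Hzero}(\exists t\le T: W_t\ge 1/\alpha)\le\alpha\,\operatorname{ess\,sup}\exp(\sum_{t\le T}\lambda_tU_t)$. This is the only meaningful reading of the statement when its right-hand side is random.
\item Split on the event $\{\sum_{t\le T}\lambda_tU_t\le c\}$. This gives $\alpha e^{c}+\Pr_{\Hzero}(\sum_{t\le T}\lambda_tU_t>c)$ for any constant $c$.
\end{itemize}

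\textbf{Minor point.} The drift hypothesis is assumed only for $t\le T$. State the supermartingale property and Ville's inequality on $0\le t\le T$ rather than for all $t\ge 0$.
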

\begin{proof}
Define the compensated process
\[
    \widetilde W_t
    \defeq
    \frac{W_t}{
        \prod_{i=1}^t(1+\lambda_i U_i)
    }.
\]
Since \(V_t\ge -1\) and \(\lambda_t\in(0,1)\), the wealth process is
nonnegative. Moreover, by the assumed drift bound,
\begin{align*}
\E_{\Hzero}[\widetilde W_t\mid \F_{t-1}]
&=
\frac{
    \E_{\Hzero}[W_t\mid \F_{t-1}]
}{
    \prod_{i=1}^t(1+\lambda_i U_i)
} \\
&=
\frac{
    W_{t-1}
    \E_{\Hzero}[1+\lambda_t V_t\mid \F_{t-1}]
}{
    \prod_{i=1}^t(1+\lambda_i U_i)
} \\
&=
\frac{
    W_{t-1}(1+\lambda_t\delta_t)
}{
    \prod_{i=1}^t(1+\lambda_i U_i)
} \\
&\le
\frac{
    W_{t-1}(1+\lambda_t U_t)
}{
    \prod_{i=1}^t(1+\lambda_i U_i)
}
=
\widetilde W_{t-1}.
\end{align*}
Thus \((\widetilde W_t)_{t=0}^T\) is a nonnegative supermartingale with
\(\widetilde W_0=1\).

If \(W_t\ge 1/\alpha\), then
\[
    \widetilde W_t
    =
    \frac{W_t}{\prod_{i=1}^t(1+\lambda_i U_i)}
    \ge
    \frac{1}{
        \alpha\prod_{i=1}^t(1+\lambda_i U_i)
    }.
\]
Equivalently,
\[
    W_t
    =
    \widetilde W_t
    \prod_{i=1}^t(1+\lambda_i U_i).
\]
Since
\[
    \prod_{i=1}^t(1+\lambda_i U_i)
    \le
    \exp\left(\sum_{i=1}^t \lambda_i U_i\right)
    \le
    \exp\left(\sum_{i=1}^T \lambda_i U_i\right),
\]
we have
\[
    W_t
    \le
    \widetilde W_t
    \exp\left(\sum_{i=1}^T \lambda_i U_i\right).
\]
Therefore,
\[
    W_t\ge \frac1\alpha
    \quad\Longrightarrow\quad
    \widetilde W_t
    \ge
    \frac{
        \exp\left(-\sum_{i=1}^T\lambda_i U_i\right)
    }{\alpha}.
\]
By Ville's inequality,
\begin{align*}
\Pr_{\Hzero}\!\left(
    \exists t\le T:
    W_t\ge \frac1\alpha
\right)
&\le
\Pr_{\Hzero}\!\left(
    \exists t\le T:
    \widetilde W_t
    \ge
    \frac{
        \exp\left(-\sum_{i=1}^T\lambda_i U_i\right)
    }{\alpha}
\right) \\
&\le
\alpha
\exp\left(\sum_{i=1}^T\lambda_i U_i\right).
\end{align*}
This proves the claim.
\end{proof}

\section{Implementation Details}
\label{apx:implementation}
\subsection{Datasets}\label{apx:implementation:datasets}

\paragraph{Synthetic neural data.}
Following \citet{pogodin2024splitkci}, we evaluate the tests on
high-dimensional synthetic neural data generated with RatInABox
\citep{rat-in-a-box}.  The setup follows the implementation of
\citet{pogodin2024splitkci}, available at
\href{https://github.com/romanpogodin/kernel-ci-testing}
{\nolinkurl{github.com/romanpogodin/kernel-ci-testing}}, and is based on the
head-direction and conjunctive-cell models from the
\href{https://github.com/RatInABox-Lab/RatInABox/blob/main/demos/conjunctive_gridcells_example.ipynb}
{RatInABox conjunctive-cells demo}.

We take \(A\) to be the activity of 100 head-direction cells and \(B\) to be the
activity of 100 conjunctive cells.  The conditioning variable \(C\) consists of
the two-dimensional head-direction vector and the two-dimensional position of
the agent.  The resulting testing question is whether the head-direction-cell
activity \(A\) contains information about the conjunctive-cell activity \(B\)
beyond what is explained by head direction and position \(C\).  As in
\citet{pogodin2024splitkci}, the activity is subsampled based on the noise
autocorrelation to obtain approximately i.i.d. observations.

\paragraph{dSprites.}
Inspired by \citet{zhang2025testing},
we construct a conditional independence task from dSprites \citep{dsprites17}.  Each observation is
a \(64 \times 64\) image containing a single object.  We fix the object scale and
use the shape label as \(A\).  The variable \(B\) is the full image.  The conditioning variable
\(C\) is a \(28 \times 28\) crop derived from \(B\), padded or placed on a fixed
canvas.

The two testing regimes differ in how the crop is chosen.  In the Type~I
setting, \(C\) is obtained from the tight bounding box of the object and hence
contains the full object.  In this case, once \(C\) is given, \(B\) should not
provide substantial additional information about the shape, so
\(A \perp\!\!\!\perp B \mid C\) should hold.  In the Type~II setting, the crop is
randomly shifted away from the object bounding box, so \(C\) typically contains
only partial object information.  Then \(B\) contains additional information
about \(A\) beyond \(C\), and the conditional independence null is false.

\paragraph{Car insurance data.}
Following the experiments of \citet{polo2023conditional,pogodin2024splitkci}, we also evaluate the
methods on a car insurance dataset originally collected by
\citet{angwin2017minority} from four US states and multiple insurance
companies.\footnote{%
Data available from \url{https://projects.propublica.org/graphics/carinsurance}.
}
The dataset contains three variables: car insurance price \(A\), minority
neighborhood indicator \(B\), and driver risk \(C\).  The minority
neighborhood indicator is defined as more than \(66\%\) non-white in California
and Texas, and more than \(50\%\) non-white in Missouri and Illinois.  The risk
variable \(C\) is constructed from driver-related risk factors, with additional
equalization of driver characteristics; see \citet{angwin2017minority} for
details.  The resulting conditional independence question is whether insurance
price remains associated with the minority-neighborhood indicator after
conditioning on driver risk.

\subsection{Kernel Choices in SKCI} \label{apx:implementation:kernels}

SKCI employs separate kernels for variables $A$, $B$, and $C$.
Unless otherwise stated, all kernels are fixed throughout a run and updated only
through improved conditional mean estimates as more data become available.

For the kernels on \(A\) and \(B\), we use an RBF kernel when the variables are
low-dimensional.  The bandwidth is fixed to \(1\) for synthetic datasets and
initialized using a variance heuristic for the car insurance data.  The
conditioning kernels \(k_{C \to A}\) and \(k_{C \to B}\) are selected by
leave-one-out kernel ridge regression with per-dimension lengthscales, with
early stopping based on a validation set.  We use the same kernel family for
conditional mean estimation and for the kernel on \(C\).

For Gaussian data, conditional means are estimated using an MLP feature map with
layer sizes \([128,64,32,8]\), followed by an RBF kernel.

For dSprites, we use convolutional feature maps before applying RBF kernels.
The full image \(B\) is encoded by a pretrained autoencoder for \(64\times 64\)
single-channel images, with hidden channel widths \([32,64,128]\) and a
\(16\)-dimensional latent representation.  The autoencoder is trained on the
same training samples used for the corresponding regression step and is then
frozen during kernel fitting.  Kernels involving \(B\) are computed on this
latent representation, with the RBF bandwidth initialized by the variance
heuristic.
The conditioning variable \(C\) is a \(28\times 28\) single-channel crop.  For
the kernel on \(C\), and for the conditional mean maps \(C\to A\) and \(C\to B\),
we use trainable CNN feature maps with hidden channel widths \([32,64,128]\)
and a fully connected hidden layer of width \(128\).  The output dimensions are
\(8\) for the kernel on \(C\), \(1\) for the \(C\to A\) map, and \(8\) for the
\(C\to B\) map.

For the remaining datasets, we apply RBF, Kronecker, or linear kernels directly,
following the dataset-specific kernel choices used in
\citet{pogodin2024splitkci,he2025hardness}.

\begin{table}[ht]
\centering
\caption{Kernel choices across datasets.}
\label{tab:kernels}
\begin{tabular}{lccc}
\toprule
Dataset & $k_A$ / $k_B$ & $k_C$ / $k_{C\to A}$ / $k_{C\to B}$ & Notes \\
\midrule
Gaussian & RBF & RBF + MLP & 1-dimensional A, B \\
Hardness (1D/3D) & RBF & RBF & 1-dimensional A, B \\
RatInABox & Linear & RBF & 100-dimensional $A,B$ \\
dSprites  & RBF / RBF + AutoEncoder & RBF + CNN & 1-dimensional A; image B, C \\
Car Insurance & RBF / Kronecker & RBF & 1-dimensional A; categorical $B$ \\
\bottomrule
\end{tabular}
\end{table}

\subsection{Conditional Mean Estimation Modes}

We evaluate SKCI under three estimation regimes:
\begin{itemize}
    \item \textbf{Oracle Mode}: The true conditional mean $\E[A \mid C]$ is provided for $C\to A$ regression task as the target.
    \item \textbf{Pretrained Mode}: Conditional mean estimators are trained using 3{,}000 auxiliary samples independent of the testing stream.
    \item \textbf{Online Mode}: Conditional mean estimators are updated sequentially using the accumulated training history.
\end{itemize}

\subsection{Baselines and Fair Comparison} \label{apx:baselines}

We compare SKCI against e-CRT \citep{shaer2023model}, DAVT
\citep{pandeva2024davt} and EC2ST \cite{pandeva2024evaluating}.

For baseline methods, all regressors assume a Gaussian conditional model
$A \mid C \sim \N(\mu(C), \sigma^2)$.
In Oracle mode, both $\mu(C)$ and $\sigma^2$ are provided; otherwise, they are
estimated from data.
Hyperparameters are selected using an additional set of 1{,}000 samples,
with the search space shown in \cref{tab:reg_search}, via 5-fold cross-validation
over 5 independent data seeds.
Final architecture choices and tuned hyperparameters for each dataset
(Gaussian, Hardness 1D/3D, RatInABox, dSprites, and Car Insurance)
are reported in \cref{tab:configs-cond}.

\begin{table}[ht]
\centering
\caption{Hyperparameter search space for baseline models.}
\label{tab:reg_search}
\begin{tabular}{ll}
\toprule
Parameter & Values \\
\midrule
Learning rate & $\{0.1, 0.01, 0.001, 0.0001\}$ \\
Weight decay & $\{0, 10^{-5}, 10^{-4}\}$ \\
Hidden channels & $\{[32], [32, 64], [32, 64, 128], [32, 64, 128, 256]\}$ \\
Hidden layers & $\{[32], [64], [128], [64,32], [128,64], [128,64,32]\}$ \\
\bottomrule
\end{tabular}
\end{table}

\begin{table}[h]
\centering
\caption{Model configurations for conditional mean estimation.}
\label{tab:configs-cond}
\begin{tabular}{lccccc}
\toprule
Dataset & Model & Hidden channels & Hidden dims & LR & Weight decay \\
\midrule
Gaussian & MLP & -- & [32] & 0.01 & 0 \\
CI Hardness (1D) & MLP & -- & [128] & 0.01 & 0 \\
CI Hardness (3D, shared) & MLP & -- & [64,32] & 0.01 & $10^{-4}$ \\
CI Hardness (3D, sep.) & MLP & -- & [32] & 0.01 & 0 \\
RatInABox & MLP & -- & [128,64] & 0.01 & 0 \\
dSprites & CNN & [32, 64] & [32] & 0.001 & 0 \\
Car insurance & MLP & -- & [128,64] & 0.01 & 0 \\
\bottomrule
\end{tabular}
\end{table}

In addition to conditional mean estimation, baseline models also rely on
discriminator or regression models whose hyperparameters are tuned separately.
For DAVT, the discriminator distinguishing $Z$ from $\tilde Z$ 
selected based on validation loss aligned with the wealth objective
$g(Z)-g(\tilde Z)$.
For e-CRT, the discriminator uses regression loss with $(A,C)$ as input and $B$ as target.
The corresponding search spaces are the same as the search space shown in \cref{tab:reg_search},
while DAVT additionally searches learning rates in
$\{0.01, 0.005, 0.001, 0.0005, 0.0001\}$.
Dataset-specific configures for discriminator architectures and optimization
parameters are reported in \cref{tab:configs-disc}.

For the image-based dSprites experiment, we use CNN-based models.  For DAVT and
EC2ST, the network processing \(B\) uses hidden channel widths
\([32,64,128]\), while the network processing the conditioning information uses
hidden channel widths \([32,64,128,256]\).  For e-CRT, the model uses an
encoder--decoder architecture, with encoder hidden channel widths \([32,64]\)
and decoder hidden channel widths \([64,32]\).

\begin{table}[h]
\centering
\caption{Discriminator configurations for each dataset.}
\label{tab:configs-disc}
\begin{tabular}{lcccc}
\toprule
Dataset & Method & Hidden dims & LR & Weight decay \\
\midrule
Gaussian & DAVT & [128] & 0.0005 & 0 \\
CI Hardness (1D) & DAVT & [128] & 0.001 & $10^{-5}$ \\
CI Hardness (3D, shared) & DAVT & [128] & 0.005 & 0 \\
CI Hardness (3D, separate) & DAVT & [128] & 0.005 & 0 \\
RatInABox & DAVT & [128] & 0.0005 & $10^{-5}$ \\
dSprites & DAVT & [64] & 0.001 & $10^{-4}$ \\
Car insurance & DAVT & [128] & 0.0005 & $10^{-5}$ \\
\midrule
Gaussian & e-CRT & [128,64] & 0.01 & $10^{-5}$ \\
CI Hardness (1D) & e-CRT & [128,64,32] & 0.01 & $10^{-4}$ \\
CI Hardness (3D, shared) & e-CRT & [128,64,32] & 0.01 & $10^{-4}$ \\
CI Hardness (3D, separate) & e-CRT & [128,64] & 0.01 & $10^{-5}$ \\
RatInABox & e-CRT & [128] & 0.01 & $10^{-4}$ \\
dSprites & e-CRT & [32] & 0.01 & $10^{-4}$ \\
Car insurance & e-CRT & [128] & 0.01 & $10^{-4}$ \\
\midrule
Gaussian & EC2ST & [128] & 0.0005 & 0 \\
CI Hardness (1D) & EC2ST & [128] & 0.001 & $10^{-5}$ \\
CI Hardness (3D, shared) & EC2ST & [128] & 0.005 & 0 \\
CI Hardness (3D, separate) & EC2ST & [128] & 0.005 & 0 \\
RatInABox & EC2ST & [128] & 0.0005 & $10^{-5}$ \\
dSprites & EC2ST & [64] & 0.001 & $10^{-4}$ \\
Car insurance & EC2ST & [128] & 0.0005 & $10^{-5}$ \\
\bottomrule
\end{tabular}
\end{table}

\section{Additional Results}
\label{apx:exp-results}

\subsection{Results for Oracle and Pretrained Mode}

\Cref{gaussian-appendix,sind1-comparison,sind000-comparison,sind012-comparison}
give emiprical results for the synthetic problems in Oracle and Pretrained mode.

\begin{figure}[ht]
  \centering
  \begin{subfigure}{0.49\columnwidth}
    \centering
    \includegraphics[width=\textwidth]{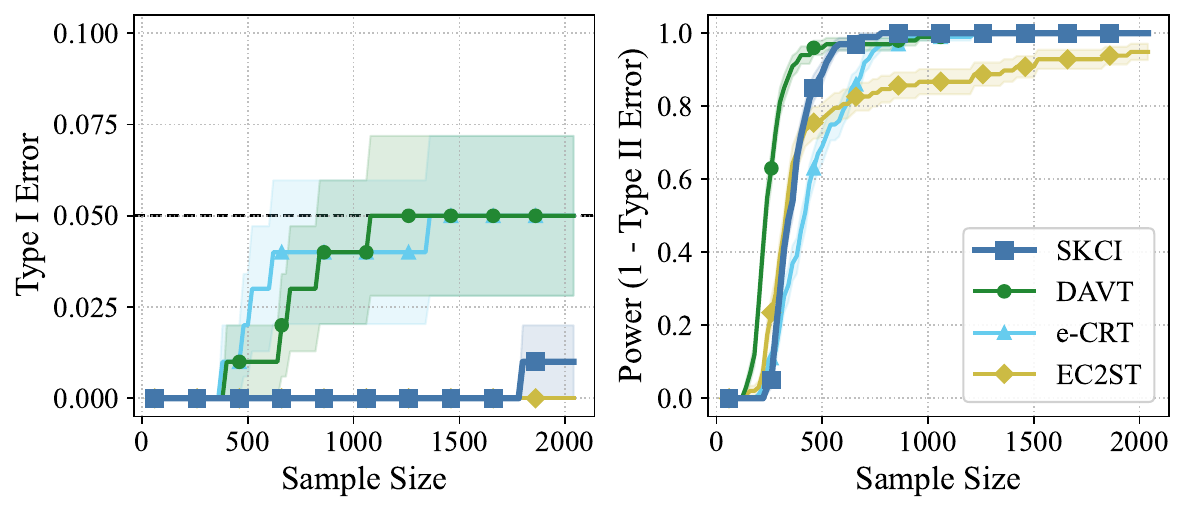}
    \caption{Oracle Mode}
  \end{subfigure}
  \hfill
  \begin{subfigure}{0.49\columnwidth}
   \centering
   \includegraphics[width=\textwidth]{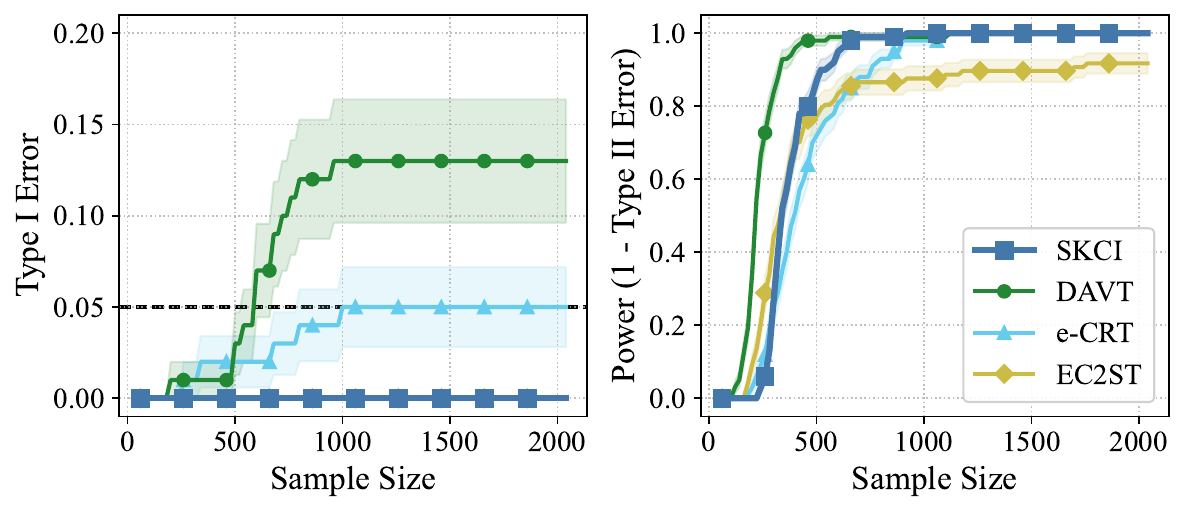}
   \caption{Pretrained Mode}
   \label{gaussian-pretrained}
  \end{subfigure}
  \caption{
    Gaussian data. 
  }
  \label{gaussian-appendix}
\end{figure}

\begin{figure}[ht]
  \centering
  \begin{subfigure}{0.49\columnwidth}
    \centering
    \includegraphics[width=\textwidth]{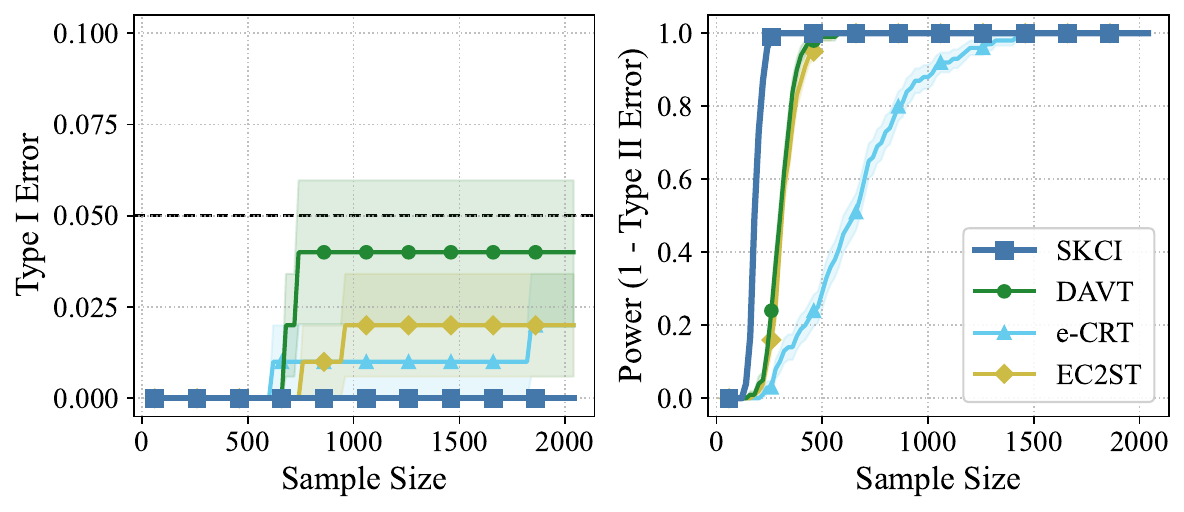}
    \caption{Oracle Mode}
    \label{sind1-oracle}
  \end{subfigure}
  \hfill
  \begin{subfigure}{0.49\columnwidth}
    \centering
    \includegraphics[width=\textwidth]{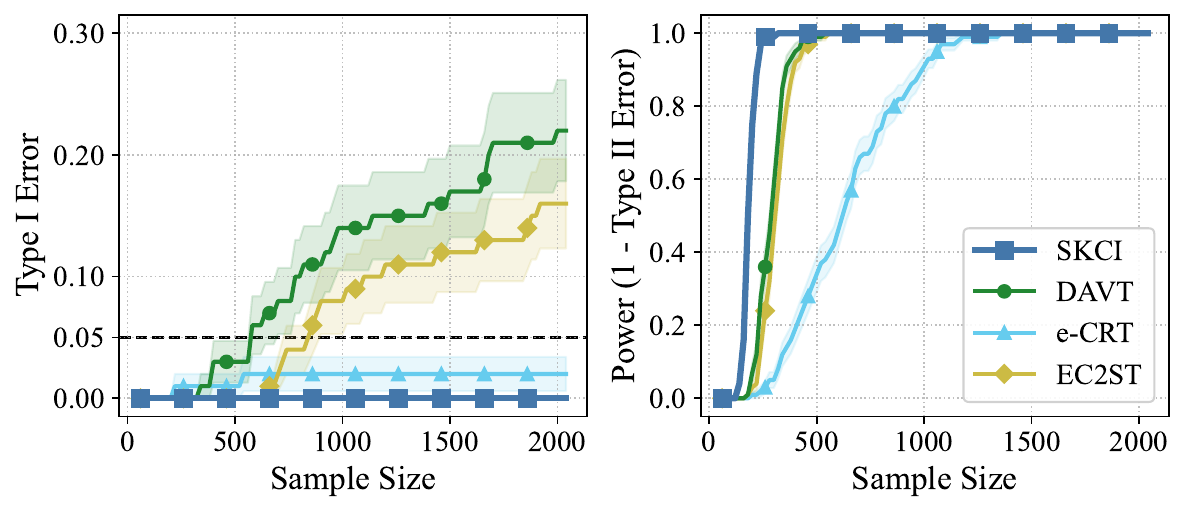}
    \caption{Pretrained Mode}
    \label{sind1-pretrained}
  \end{subfigure}
  \caption{
    CI Hardness data (1D). 
  }
  \label{sind1-comparison}
\end{figure}

\begin{figure}[h]
  \centering
  \begin{subfigure}{0.49\columnwidth}
    \centering
    \includegraphics[width=\textwidth]{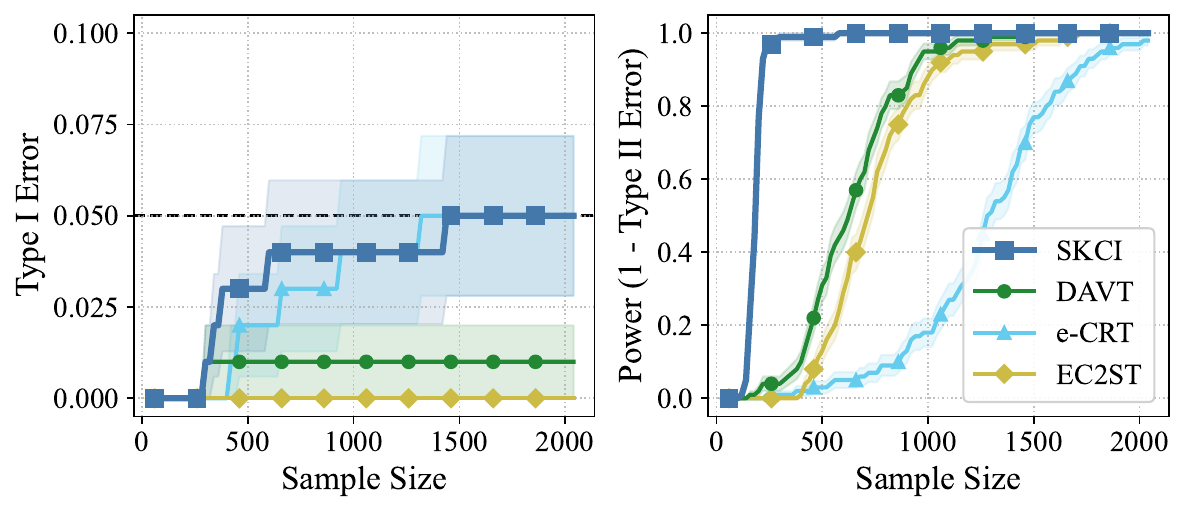}
    \caption{Oracle Mode}
    \label{sind000-oracle}
  \end{subfigure}
  \hfill
  \begin{subfigure}{0.49\columnwidth}
    \centering
    \includegraphics[width=\textwidth]{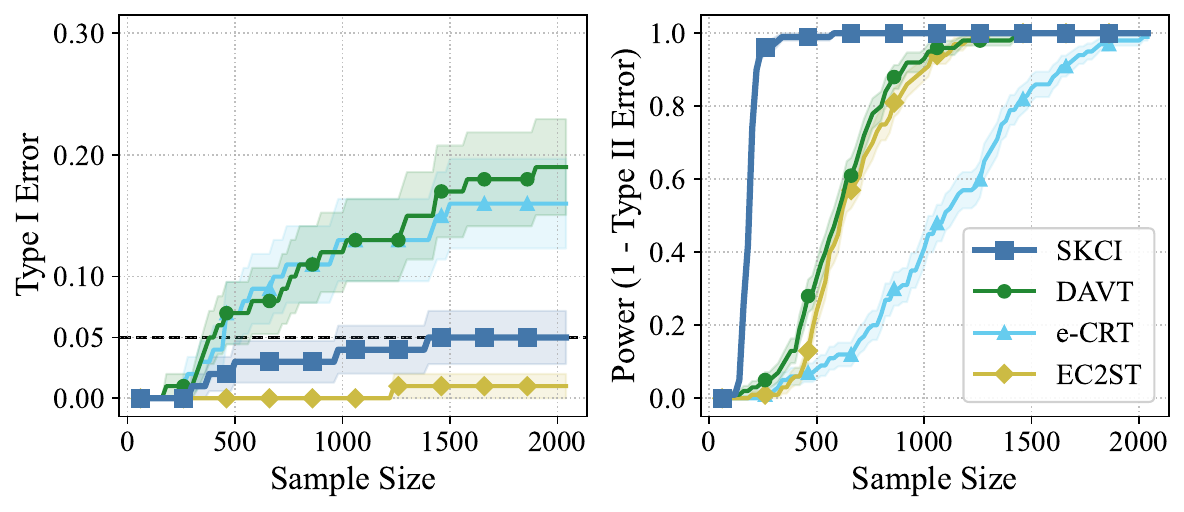}
    \caption{Pretrained Mode}
    \label{sind000-pretrained}
  \end{subfigure}
  \caption{
    CI Hardness data (3D, shared coordinate). 
  }
  \label{sind000-comparison}
\end{figure}

\begin{figure}[h]
  \centering
  \begin{subfigure}{0.49\columnwidth}
    \centering
    \includegraphics[width=\textwidth]{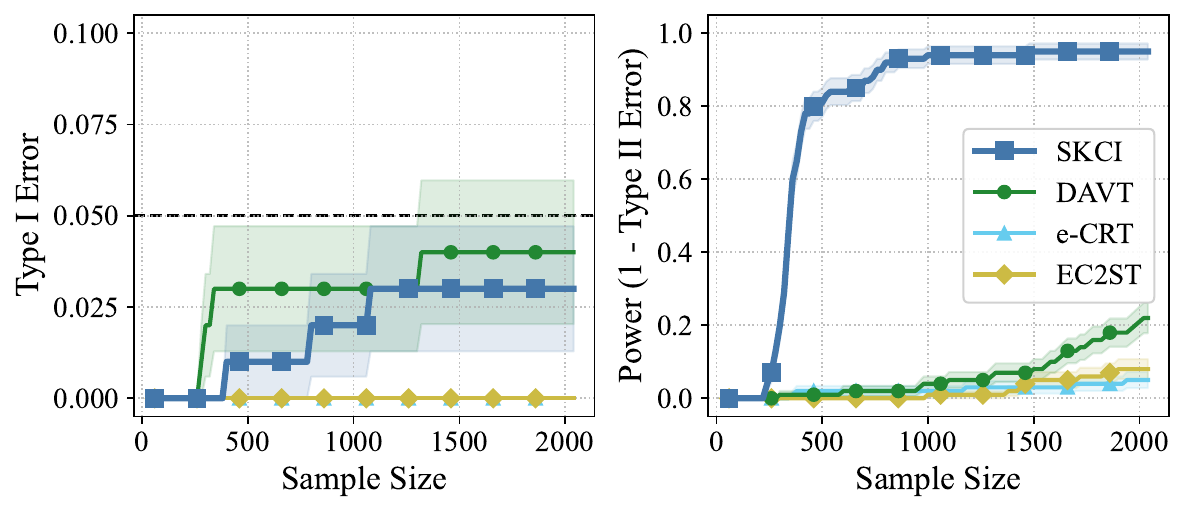}
    \caption{Oracle Mode}
    \label{sind012-oracle}
  \end{subfigure}
  \hfill
  \begin{subfigure}{0.49\columnwidth}
    \centering
    \includegraphics[width=\textwidth]{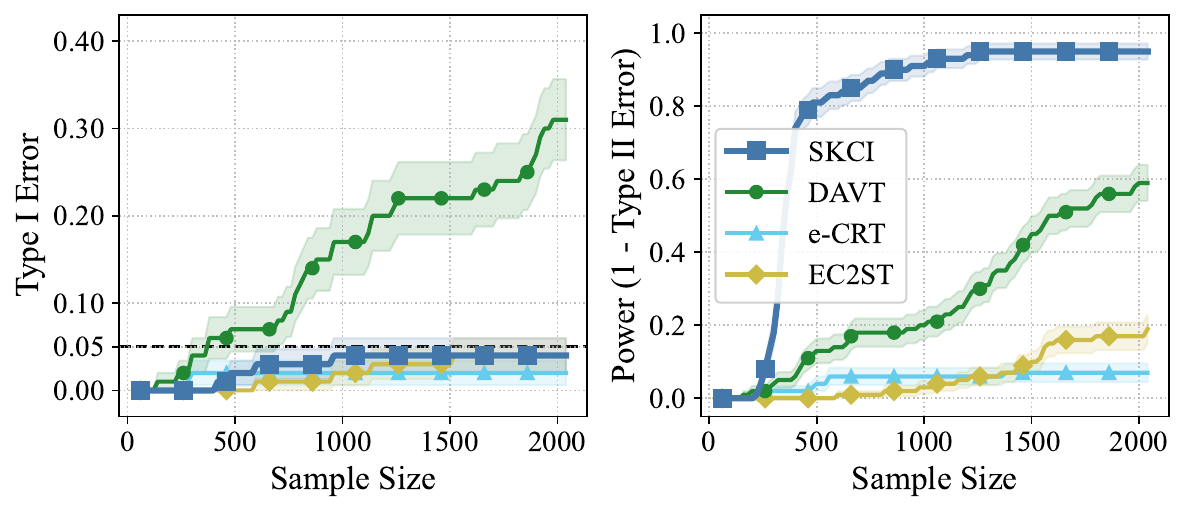}
    \caption{Pretrained Mode}
    \label{sind012-pretrained}
  \end{subfigure}
  \caption{
    CI Hardness data (3D, separate coordinate). 
  }
  \label{sind012-comparison}
\end{figure}

\begin{figure}[h]
  \begin{center}
    \centerline{\includegraphics[width=0.5\columnwidth]{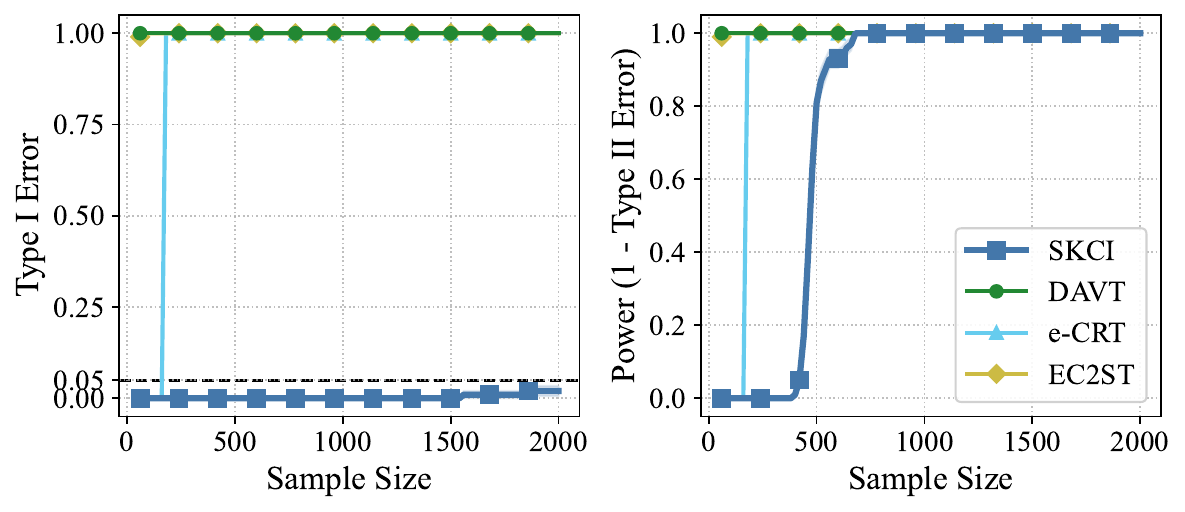}}
    \caption{
Synthetic neural data, Pretrained Mode.
}
    \label{rats-pretrained}
  \end{center}
\end{figure}

\subsection{Long Time Horizon Experiments}
\label{apx:longT}
Figure~\ref{ab-sin-longT} reports long-horizon runs under the null, in which we keep the
hyperparameters fixed and only increase the time horizon \(T\).  The cumulative
number of rejections stabilizes as the sample size increases.  In both settings,
most rejections occur early in the online sequence, and the reject count remains
nearly constant after a moderate number of samples.  This suggests that, when
the conditional mean estimates are sufficiently accurate, the rejection count
does not necessarily continue to grow with the time horizon.
\begin{figure}[ht]
  \centering
  \begin{subfigure}{0.3\columnwidth}
    \centering
    \includegraphics[width=\textwidth]{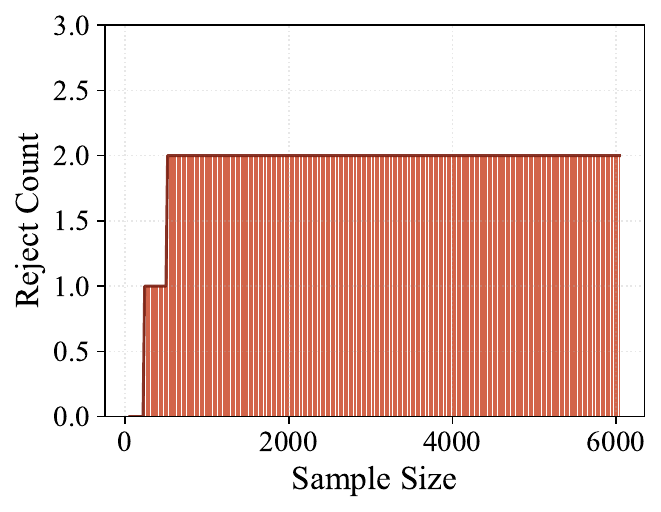}
    \caption{1-dimensional data.}
    \label{ab-sind1-T}
  \end{subfigure}
  \begin{subfigure}{0.3\columnwidth}
    \centering
    \includegraphics[width=\textwidth]{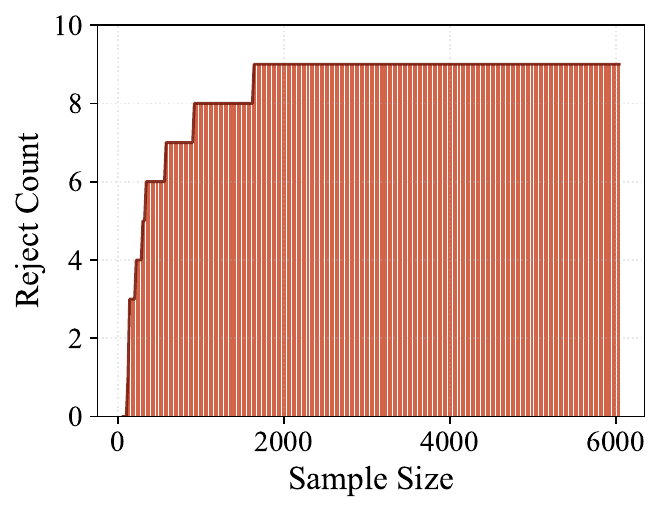}
    \caption{3-dimensional, shared coordinate.}
    \label{ab-sind000-T}
  \end{subfigure}
  \caption{
    Long horizon experiments on CI hardness data under the null, Online mode, 100 runs.
  }
  \label{ab-sin-longT}
\end{figure}

\subsection{Ablation}
\label{apx:ablation}
\paragraph{Batch Size}
Figure \ref{ab-sind1-bs} shows that smaller batch sizes accelerate power accumulation but may slightly increase variance in early rounds.
The default choice $b=20$ offers a good balance between responsiveness and stability.

\paragraph{Regularization Parameter}
Figure \ref{ab-sind1-eps} illustrates the effect of the denominator regularization $\varepsilon$.
Too small values may lead to unstable payoffs, while overly large values reduce power.
Across experiments, we fix $\varepsilon=10^{-6}$ which provides robust performance.

\paragraph{Shift Parameter}
Figure~\ref{ab-sind3-gamma} examines the effect of using a more
conservative shift parameter \(\gamma\).  Since the payoff is nonincreasing in
\(\gamma\), increasing \(\gamma\) reduces the size of the wealth update and can
therefore make the test more conservative.  We compare the default choice
\(\hat\gamma_t\) with the enlarged value \(\hat\gamma_t+0.1\).  The enlarged
choice reduces Type~I error while leaving power essentially unchanged in this
experiment, suggesting that the procedure is not highly sensitive to small
conservative perturbations of \(\hat\gamma_t\).

\begin{figure}[ht]
  \centering

  \begin{subfigure}{0.49\columnwidth}
    \centering
    \includegraphics[width=\textwidth]{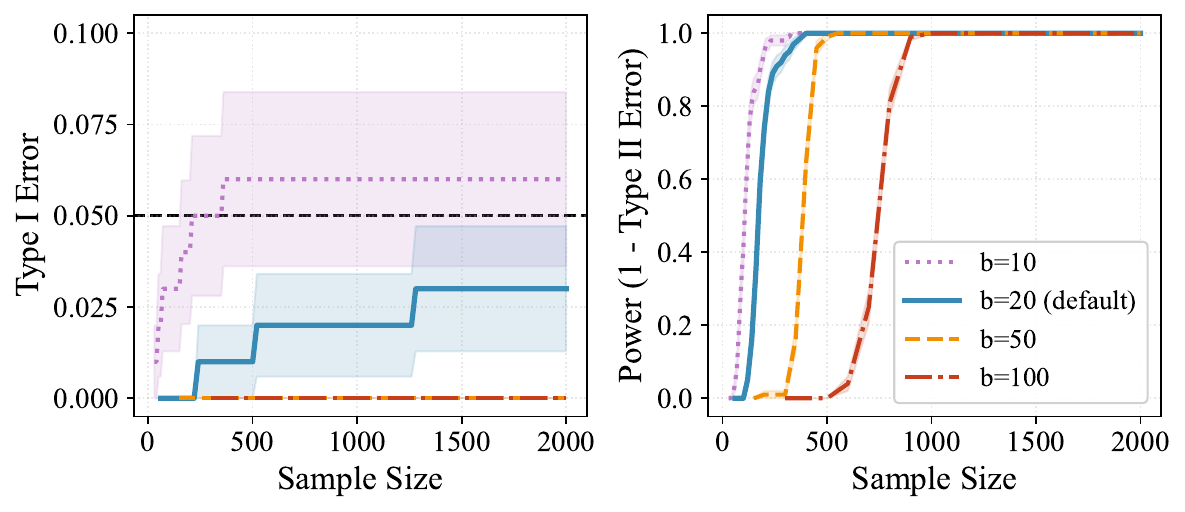}
    \caption{Batch size (1D).}
    \label{ab-sind1-bs}
  \end{subfigure}
  \hfill
  \begin{subfigure}{0.49\columnwidth}
    \centering
    \includegraphics[width=\textwidth]{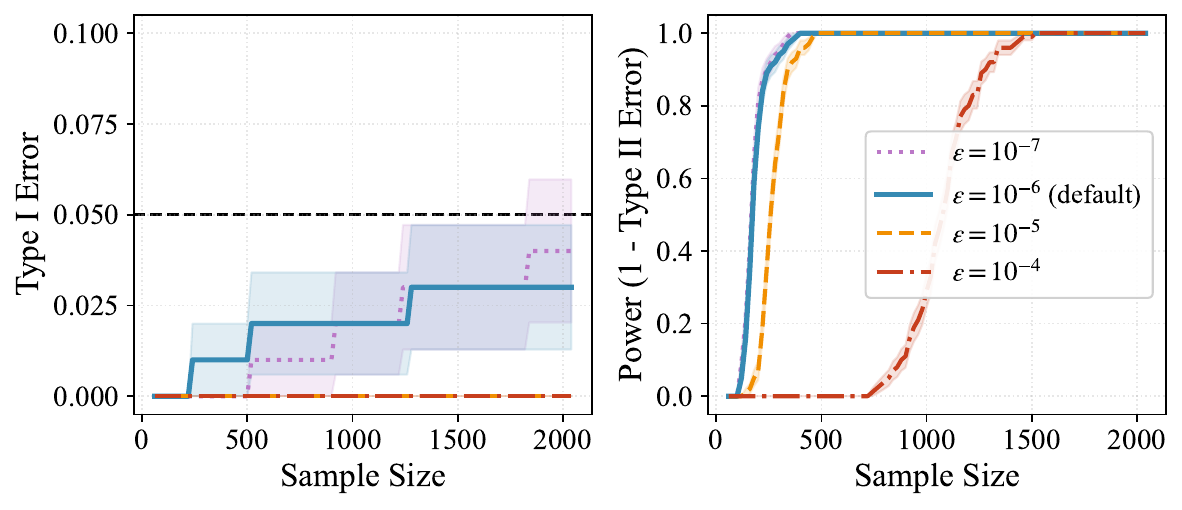}
    \caption{Regularization \(\varepsilon\) (1D).}
    \label{ab-sind1-eps}
  \end{subfigure}

  \vspace{0.5em}

  \begin{subfigure}{0.5\columnwidth}
    \centering
    \includegraphics[width=\textwidth]{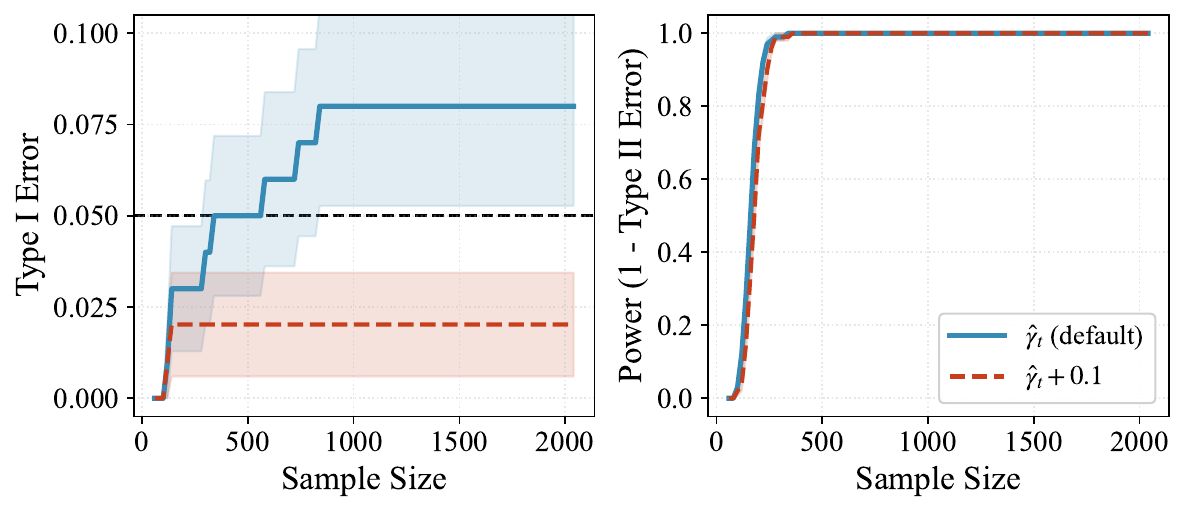}
    \caption{Shift \(\gamma\) (3D, shared coordinate).}
    \label{ab-sind3-gamma}
  \end{subfigure}

  \caption{
  Ablation studies for SKCI on CI-hardness data, Online mode.
  Panels (a) and (b) show the effect of varying the batch size and denominator
  regularization parameter \(\varepsilon\) on the one-dimensional setting.
  Panel (c) studies a conservative perturbation of the shift parameter
  on the three-dimensional shared-coordinate setting, comparing the default
  \(\hat\gamma_t\) with \(\hat\gamma_t + 0.1\).
  }
  \label{fig:skci-ablation}
\end{figure}

\end{document}

%% file: math_commands.tex
\usepackage{amsmath,amsfonts, bm, amsthm, mathtools, amssymb}
\usepackage{centernot}

\DeclareFontFamily{U}{mathx}{\hyphenchar\font45}
\DeclareFontShape{U}{mathx}{m}{n}{<-> mathx10}{}
\DeclareSymbolFont{mathx}{U}{mathx}{m}{n}
\DeclareMathAccent{\widebar}{0}{mathx}{"73}

\usepackage{thmtools}
\usepackage{thm-restate}
\usepackage{dsfont} 
\DeclarePairedDelimiter{\norm}{\lVert}{\rVert}
\DeclarePairedDelimiter{\abs}{\lvert}{\rvert}
\DeclarePairedDelimiterX{\inner}[2]{\langle}{\rangle}{#1,#2}

\def\Hzero{{H_0}}
\def\Hone{{H_1}}
\def\N{{\mathcal{N}}}

\def\A{{\mathcal{A}}}
\def\B{{\mathcal{B}}}
\def\C{{\mathcal{C}}}
\def\H{{\mathcal{H}}}
\def\F{{\mathcal{F}}}
\def\G{{\mathcal{G}}}

\def\Y{{\mathcal{Y}}}
\def\Z{{\mathcal{Z}}}

\def\Xtr{\mathcal{X}^{\mathrm{tr}}}

\def\Xval{\mathcal{X}^{\mathrm{val}}}
\def\tZ{{\widetilde{Z}}}
\def\tA{{\widetilde{A}}}
\def\tV{{\widetilde{V}}}

\def\Vr{V^{\mathrm{raw}}}
\def\hV{{\widehat{V}}}

\def\bmu{{\mu}}
\def\bsigma{{\sigma}}
\DeclareMathOperator{\HS}{HS}

\def\hgamma{\widehat{\gamma}}
\def\hmu{\widehat{\mu}}
\def\hsigma{\widehat{\sigma}}

\def\bE{\widebar{\E}}
\def\bVar{\widebar{\Var}}

\def\muact{\mu_{A\mid C}^{(t)}}
\def\mubct{\mu_{B\mid C}^{(t)}}
\def\detact{{\delta_{A\mid C}^{(t)}}}
\def\detbct{{\delta_{B\mid C}^{(t)}}}

\def\phia{{\phi_A}}
\def\phib{{\phi_B}}
\def\phic{{\phi_C}}

\def\muac{\mu_{A\mid C}}
\def\mubc{\mu_{B\mid C}}

\def\HSnorm2#1{{\norm{#1}^2_{\mathrm{HS}}}}

\newcommand\ind{\protect\mathpalette{\protect\independenT}{\perp}}
\def\independenT#1#2{\mathrel{\rlap{$#1#2$}\mkern2mu{#1#2}}}
\newcommand{\nind}{\centernot{\ind}}

\def\inftynorm2#1{{\norm{#1}^2_\infty}}

\def\1{\bm{1}}

\NewDocumentCommand{\cum}{m m o}{%
  \mathcal{K}^{\IfValueT{#3}{#3}}_{#1}(#2)%
}

\DeclareMathOperator{\E}{\mathbb{E}}

\newcommand{\R}{\mathbb{R}}

\DeclareMathOperator{\Var}{{Var}}

\newcommand{\kvar}[1][]{%
  \kappa_{\mathrm{var}}%
  \if\relax\detokenize{#1}\relax
  \else
    ^{#1}%
  \fi
}

\DeclareMathOperator*{\argmax}{arg\,max}

\newcommand{\defeq}{\coloneq}